\newcommand{\xmark}{\ding{55}}%
\newtheorem{theorem}{Theorem}
\newtheorem{conjecture}[theorem]{Conjecture}
\newtheorem{lemma}{Lemma}[section]
\newtheorem{proposition}[theorem]{Proposition}
\newcommand{\dt}{\Delta}
\newcommand{\dd}{\mathop{}\!d}
\DeclareMathOperator*{\diag}{diag}
\newcommand{\para}[1]{\paragraph{#1}}
\newlength{\defbaselineskip}
\title{On the Parameterization and Initialization of \\ Diagonal State Space Models}
\author[$\dagger$]{Albert Gu}
\author[$\ddagger$]{Ankit Gupta}
\author[$\dagger$]{Karan Goel}
\author[$\dagger$]{Christopher R{\'e}}
\affil[$\dagger$]{Department of Computer Science, Stanford University}
\affil[$\ddagger$]{IBM Research}
\affil[]{{\texttt{\{albertgu,krng\}@stanford.edu}, \texttt{chrismre@cs.stanford.edu}}}
\date{}
\begin{document}

\maketitle

\begin{abstract}
  State space models (SSM) have recently been shown to be very effective as a deep learning layer as a promising alternative to sequence models such as RNNs, CNNs, or Transformers.
  The first version to show this potential was the S4 model, which is particularly effective on tasks involving long-range dependencies by using a prescribed state matrix called the HiPPO matrix.
  While this has an interpretable mathematical mechanism for modeling long dependencies,
  it introduces a custom representation and algorithm that can be difficult to implement.
  On the other hand, a recent variant of S4 called DSS showed that restricting the state matrix to be fully diagonal can still preserve the performance of the original model when using a specific initialization based on approximating S4's matrix.
  This work seeks to systematically understand how to parameterize and initialize such diagonal state space models.
  While it follows from classical results that almost all SSMs have an equivalent diagonal form, we show that the initialization is critical for performance.
  We explain why DSS works mathematically, by showing that the diagonal restriction of S4's matrix surprisingly recovers the same kernel in the limit of infinite state dimension.
  We also systematically describe various design choices in parameterizing and computing diagonal SSMs, and perform a controlled empirical study ablating the effects of these choices.
  Our final model S4D is a simple diagonal version of S4 whose kernel computation requires just 2 lines of code and performs comparably to S4 in almost all settings, with state-of-the-art results for image, audio, and medical time-series domains, and averaging 85\% on the Long Range Arena benchmark.
\end{abstract}

\section{Introduction}
\label{sec:intro}

A core class of models in modern deep learning are sequence models, which are parameterized mappings operating on arbitrary sequences of inputs. %
Recent approaches based on state space models (SSMs) have outperformed traditional deep sequence models such as recurrent neural networks (RNNs), convolutional neural networks (CNNs), and Transformers, in both computational efficiency and modeling ability.
In particular, the S4 model displayed strong results on a range of sequence modeling tasks, especially on long sequences \citep{gu2022efficiently}.
Its ability to model long-range dependencies arises from being defined with a particular state matrix called the ``HiPPO matrix''~\citep{gu2020hippo},
which allows S4 to be viewed as a convolutional model that decomposes an input onto an orthogonal system of smooth basis functions\citep{gu2022hippo}. %

However, beyond its theoretical interpretation,
actually computing S4 as a deep learning model requires a sophisticated algorithm with many linear algebraic techniques that are difficult to understand and implement.
These techniques were necessitated by parameterizing its state matrix as a \textbf{diagonal plus low-rank} (DPLR) matrix, which is necessary to capture HiPPO matrices.
A natural question is whether simplifications of this parameterization and algorithm are possible.
In particular, removing the low-rank term would result in a \textbf{diagonal state space model} (diagonal SSM) that is dramatically simpler to implement and understand. %

Although it is known that almost all SSMs have an equivalent diagonal form---and therefore (complex) diagonal SSMs are fully expressive algebraically---they may not represent all SSMs numerically, and finding a good initialization is critical.
\citet{gu2022efficiently} showed that it is difficult to find a performant diagonal SSM, and that many alternative parameterizations of the state matrix -- including by random diagonal matrices -- are much less effective empirically, which motivated the necessity of the more complicated HiPPO matrix.
However, recently \citet{gupta2022diagonal} made the empirical observation that a variant of S4 using a \emph{particular diagonal matrix} is nearly as effective as the original S4 method.
This matrix is based on the original HiPPO matrix and is defined by simply chopping off the low-rank term in the DPLR representation.

The discovery of performant diagonal state matrices opens up new possibilities for simplifying deep state space models,
and consolidating models such as S4 and DSS to understand and improve them.
First, the strongest version of DSS \emph{computes} the SSM with a complex-valued softmax that complicates the algorithm, and is actually less efficient than S4.
Additionally, DSS and S4 differ in several auxiliary aspects of \emph{parameterizing} SSMs that can conflate performance effects, making it more difficult to isolate the core effects of diagonal versus DPLR state matrices.
Most importantly, DSS relies on \emph{initializing} the state matrix to a particular approximation of S4's HiPPO matrix.
While S4's matrix has a mathematical interpretation for addressing long-range dependencies, the efficacy of the diagonal approximation to it remains theoretically unexplained.

\begin{figure}[!t]
\begin{minipage}{\linewidth}%
    \centering
    \includegraphics[width=\linewidth]{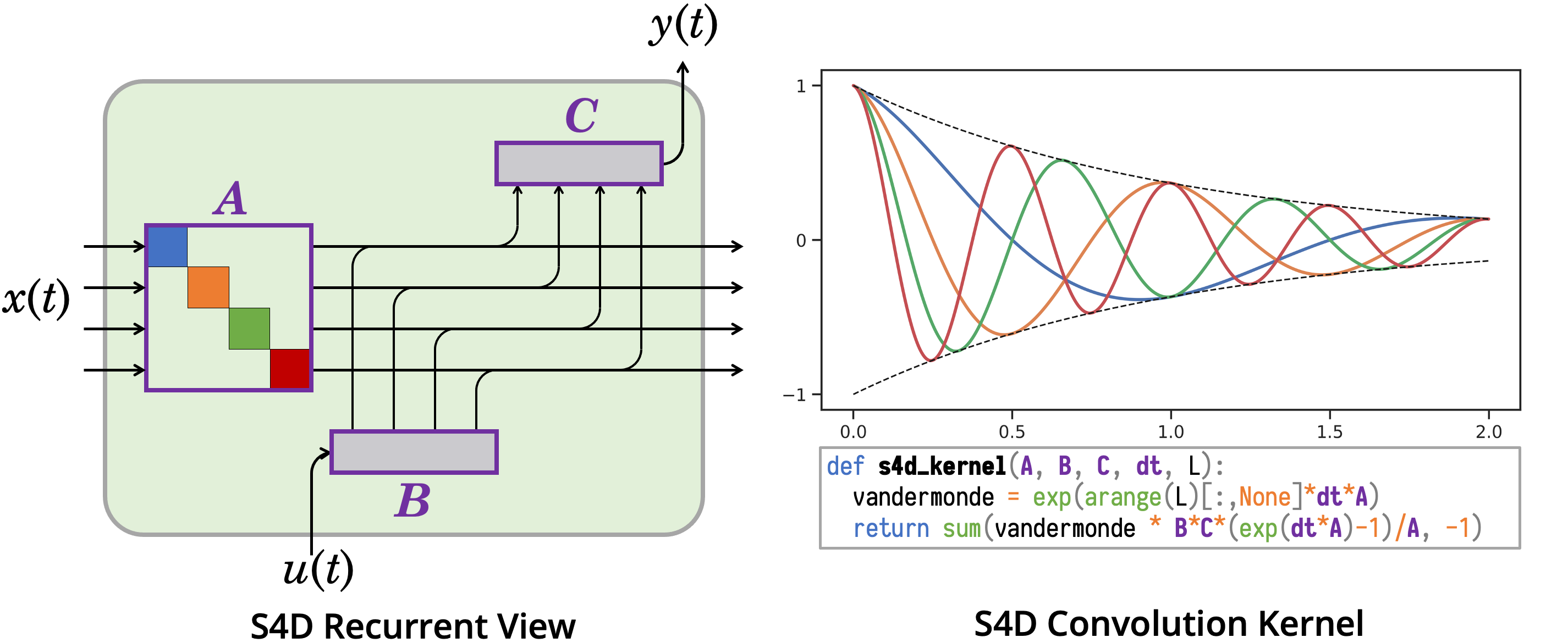}
\end{minipage}
\caption{S4D is a diagonal SSM which inherits the strengths of S4 while being much simpler. (\emph{Left}) The diagonal structure allows it to be viewed as a collection of 1-dimensional SSMs. (\emph{Right}) As a convolutional model, S4D has a simple interpretable convolution kernel which can be implemented in two lines of code. Colors denote independent 1-D SSMs; purple denotes trainable parameters.}
\label{fig:1}
\end{figure}

In this work, we seek to systematically understand how to train diagonal SSMs.
We introduce the \textbf{S4D} method, a diagonal SSM which combines the best of S4's \emph{computation} and \emph{parameterization} and DSS's \emph{initialization}, resulting in a method that is extremely simple, theoretically princpled, and empirically effective.
\begin{itemize}[leftmargin=*,itemsep=0pt]%
  \item First, we describe S4D, a simple method outlined by S4 for computing diagonal instead of DPLR matrices, which is based on \textbf{Vandermonde matrix multiplication} and is even simpler and more efficient than the DSS.
    Outside of the core state matrix, we categorize different representations of the other components of SSMs, introducing flexible design choices that capture both S4 and DSS and allow different SSM parameterizations to be systematically compared (\cref{sec:method}).
  \item We provide a new mathematical analysis of DSS's initialization, showing that the diagonal approximation of the original HiPPO matrix surprisingly produces the same dynamics as S4 when the state size goes to infinity.
    We propose even simpler variants of diagonal SSMs using different initializations of the state matrix (\cref{sec:initialization}).
  \item We perform a controlled study of these various design choices across many domains, tasks, and sequence lengths, and additionally compare diagonal (S4D) versus DPLR (S4) variants.
    Our best S4D methods are competitive with S4 on almost all settings, with near state-of-the-art results on image, audio, and medical time series benchmarks, and achieving \textbf{85\%} on the Long Range Arena benchmark (\cref{sec:experiments}).
\end{itemize}

\section{Background}

\paragraph{Continuous State Spaces Models}

S4 investigated state space models \eqref{eq:ssm} that are parameterized maps on signals \( u(t) \mapsto y(t) \).
These SSMs are linear time-invariant systems that can be represented either as a linear ODE (equation \eqref{eq:ssm})
or convolution (equation \eqref{eq:ssm-convolution}).

\begin{minipage}{.5\linewidth}
\begin{equation}
  \label{eq:ssm}
\begin{aligned}
  x'(t) &= \bm{A}x(t) + \bm{B}u(t)  \\
  y(t)  &= \bm{C}x(t)
\end{aligned}
\end{equation}
\end{minipage}
\begin{minipage}{.5\linewidth}
\begin{equation}%
  \label{eq:ssm-convolution}
  \begin{aligned}
    K(t) &= \bm{C} e^{t\bm{A}} \bm{B} \\
    y(t) &= (K \ast u)(t) %
  \end{aligned}
\end{equation}
\end{minipage}

Here the parameters are the state matrix \( \bm{A} \in \mathbbm{C}^{N \times N} \) and other matrices \( \bm{B} \in \mathbbm{C}^{N \times 1}, \bm{C} \in \mathbbm{C}^{1 \times N} \).
In the case of diagonal SSMs, \( \bm{A} \) is diagonal and we will overload notation so that \( \bm{A}_n, \bm{B}_n, \bm{C}_n \) denotes the entries of the parameters.

An intuitive way to view the convolution kernel \eqref{eq:ssm-convolution} is to interpret it as a linear combination (controlled by \( \bm{C} \)) of \textbf{basis kernels} \( K_n(t) \) (controlled by \( \bm{A}, \bm{B} \))
\begin{align}
  \label{eq:ssm-basis}
  K(t) = \sum_{n=0}^{N-1} \bm{C}_n K_n(t)
  \qquad \qquad
  K_n(t) := \bm{e}_n^\top e^{t\bm{A}}\bm{B}
\end{align}
We denote this basis as \( K(t) = K_{\bm{A},\bm{B}}(t) = e^{t\bm{A}}\bm{B} \) if necessary to disambiguate; note that it is a vector of \( N \) functions.
In the case of diagonal SSMs, each function \( K_n(t) \) is just \( e^{t\bm{A}_n}\bm{B}_n \).

\paragraph{S4: Structured State Spaces}

\looseness=-1
As a deep learning model, SSMs have many elegant properties with concrete empirical and computational benefits \citep{gu2021lssl}.
For example, the convolutional form \eqref{eq:ssm-convolution} can be converted into a temporal recurrence that is substantially faster for autoregressive applications \citep{goel2022sashimi}.

However, making SSMs effective required overcoming two key challenges: choosing appropriate values for the matrices, and computing the kernel \eqref{eq:ssm-convolution} efficiently.

First, \citet{gu2021lssl} showed that naive instantiations of the SSM do not perform well,
and instead relied on a particular (real-valued) matrix \( \bm{A} \) called the HiPPO-LegS matrix \eqref{eq:hippo-legs}.\footnote{HiPPO also specifies formulas for \( \bm{B} \), but the state matrix \( \bm{A} \) is more important. There are many other HiPPO instantiations besides LegS, but HiPPO-LegS is the main one that S4 uses and the term ``HiPPO matrix'' without the suffix refers to this one.}
These matrices were derived so that the basis kernels \( K_n(t) \) have closed-form formulas \( L_n(e^{-t}) \), where \( L_n(t) \) are normalized Legendre polynomials.
Consequently, the SSM has a mathematical interpretation of decomposing the input signal \( u(t) \) onto a set of infinitely-long basis functions that are orthogonal respect to an exponentially-decaying measure,
giving it long-range modeling abilities~\citep{gu2022hippo}.

\looseness=-1
Second, S4 introduced a particular parameterization that decomposed this \( \bm{A} \) matrix into the sum of a normal and rank-1 matrix \eqref{eq:hippo-legsd}, which can be unitarily conjugated into a (complex) diagonal plus rank-1 matrix.
Leveraging this structured form, they then introduced a sophisticated algorithm for efficiently computing the convolution kernel \eqref{eq:ssm-convolution} for state matrices that are \textbf{diagonal plus low-rank (DPLR)}.

\small
\begin{minipage}{.5\linewidth}%
\begin{equation}
  \label{eq:hippo-legs}
  \qquad
  \begin{aligned}%
    \bm{A}_{nk}
    &=
    -
    \begin{cases}
      (2n+1)^{\frac{1}{2}}(2k+1)^{\frac{1}{2}} & n > k \\
      n+1 & n = k \\
      0 & n < k
    \end{cases}
    \\
    \bm{B}_n &= (2n+1)^{\frac{1}{2}}
    \quad
    \bm{P}_n = (n + 1 / 2)^{\frac{1}{2}}
    \\
    & (\textbf{HiPPO-LegS matrix used in S4})
  \end{aligned}
\end{equation}
\end{minipage}
\begin{minipage}{.5\linewidth}%
\begin{equation}
  \label{eq:hippo-legsd}
  \begin{aligned}%
    \bm{A}^{(N)}_{nk}
  &=
    -
    \begin{cases}
      (n+\frac{1}{2})^{1/2}(k+\frac{1}{2})^{1/2} & n > k \\
      \frac{1}{2} & n = k \\
      (n+\frac{1}{2})^{1/2}(k+\frac{1}{2})^{1/2} & n < k
    \end{cases}
    \\
    \bm{A} &= \bm{A}^{(N)} - \bm{P}\bm{P}^\top,
    \qquad
    \bm{A}^{(D)} := \operatorname*{eig}(\bm{A}^{(N)})
    \\
    & (\textbf{Normal / diagonal plus low-rank form})
  \end{aligned}
\end{equation}
\end{minipage}
\normalsize

\paragraph{DSS: Diagonal State Spaces}

S4 was originally motivated by searching for a \emph{diagonal state matrix},
which would be even more structured and result in very simple computation of the SSM.
However, the HiPPO-LegS matrix cannot be stably transformed into diagonal form \citep[Lemma 3.2]{gu2022efficiently},
and they were unable to find any diagonal matrices that performed well, resulting in the DPLR formulation.

\looseness=-1
\citet{gupta2022diagonal} made the surprising empirical observation that simply removing the low-rank portion of the DPLR form of the HiPPO-LegS matrix results in a diagonal matrix that performs comparably to the original S4 method.
More precisely, their initialization is the diagonal matrix \( \bm{A}^{(D)} \), or the diagonalization of \( \bm{A}^{(N)} \) in \eqref{eq:hippo-legsd}.
They termed \( \bm{A}^{(N)} \) the \emph{skew-HiPPO} matrix,
which we will also call the \emph{normal-HiPPO} matrix.
To be more specific and disambiguate these variants, we may also call \( \bm{A}^{(N)} \) the HiPPO-LegS-N or HiPPO-N matrix and \( \bm{A}^{(D)} \) the HiPPO-LegS-D or HiPPO-D matrix.

In addition to this initialization, %
they proposed a method for computing a diagonal SSM kernel.
Beyond these two core differences, several other aspects of their parameterization differ from S4's.

In \cref{sec:method,sec:initialization}, we systematically study the components of DSS:
we categorize different ways to parameterize and compute the diagonal state space,
and explain the theoretical interpretion of this particular diagonal \( \bm{A} \) matrix.

Because there are several different concrete matrices with different naming conventions, this table summarizes these special matrices and ways to refer to them.
\begin{center}
  \small
  \centering
  \begin{tabular}{@{}lll@{}}
    \toprule
    Matrix & Full Name & Alternate Names \\
    \midrule
    \( \bm{A} \) & HiPPO-LegS & HiPPO matrix, LegS matrix \\
    \( \bm{A}^{(N)} \) & HiPPO-LegS-N & HiPPO-N, skew-HiPPO, normal-HiPPO \\
    \( \bm{A}^{(D)} \) & HiPPO-LegS-D & HiPPO-D, diagonal-HiPPO \\
    \bottomrule
  \end{tabular}
\end{center}

\section{Parameterizing Diagonal State Spaces}
\label{sec:method}

We describe various choices for the computation and parameterization of diagonal state spaces.
Our categorization of these choices leads to simple variants of the core method.
Both DSS and our proposed S4D can be described using a combination of these factors (\cref{sec:s4d}).

\subsection{Discretization}
\label{sec:method:disc}

The true continuous-time SSM can be represented as a continuous convolution
\(
y(t) = (K \ast u)(t) = \int_0^\infty \bm{C} e^{s\bm{A}}\bm{B} u(t-s) \dd s.
\)

In discrete time, we view an input sequence \( u_0, u_1, \dots \) as uniformly-spaced samples from an underlying function \( u(t) \) and must approximate this integral.
Standard methods for doing so that preserve the convolutional structure of the model exist.
The first step is to discretize the parameters.
Two simple choices that have been used in prior work include %
\begin{align*}
\begin{aligned}
  (\textbf{Bilinear}) \:\: & \bm{\overline{A}} = (\bm{I} - \dt/2 \bm{A})^{-1} (\bm{I} + \dt/2 \bm{A}) &
  \qquad (\textbf{ZOH}) \:\: & \bm{\overline{A}} = \exp(\dt \bm{A}) %
  \\
  & \bm{\overline{B}} = (\bm{I} - \dt/2 \bm{A})^{-1} \cdot \dt \bm{B} &
  \qquad & \bm{\overline{B}} = (\dt \bm{A})^{-1} (\exp(\dt \cdot \bm{A}) - \bm{I}) \cdot \dt \bm{B} %
  .
\end{aligned}
\end{align*}

With these methods, the discrete-time SSM output is just
\begin{equation}%
  \label{eq:krylov}
\begin{aligned}
  & y = u \ast \bm{\overline{K}}
  & \qquad \text{where }
  \bm{\overline{K}} = (\bm{C}\bm{\overline{B}}, \bm{C}\bm{\overline{A}}\bm{\overline{B}}, \dots, \bm{C}\bm{\overline{A}}^{L-1}\bm{\overline{B}})
\end{aligned}
  .
\end{equation}

These integration rules have both been used in prior works (e.g. LMU and DSS use ZOH \citep{voelker2019legendre,gupta2022diagonal} while S4 and its predecessors use bilinear \citep{gu2020hippo,gu2021lssl,gu2022efficiently}).

\looseness=-1
In \cref{sec:experiments}, we show that there is little empirical difference between them.
However, we note that there is a curious phenomenon where the bilinear transform actually perfectly smooths out the kernel used in DSS to match the S4 kernel (\cref{sec:initialization} \cref{fig:basis-legsd-1024-bilinear}).
We additionally note that numerical integration is a rich and well-studied topic and more stable methods of approximating the convolutional integral may exist.
For example, it is well-known that simple rules like the Trapezoid rule \citep{ralston2001first} can dramatically reduce numerical integration error when the function has bounded second derivative.

\subsection{Convolution Kernel}

The main computational difficulty of the original S4 model is computing the convolution kernel \( \bm{\overline{K}} \).
This is extremely slow for general state matrices \( \bm{A} \),
and S4 introduced a complicated algorithm for DPLR state matrices.
When \( \bm{A} \) is diagonal, the computation is nearly trivial.
By \eqref{eq:krylov},
\begin{equation}
  \label{eq:vandermonde}
\bm{\overline{K}}_\ell = \sum_{n = 0}^{N-1} \bm{C}_n \bm{\overline{A}}_n^\ell \bm{\overline{B}}_n
\implies \bm{\overline{K}} = (\bm{\overline{B}}^\top \circ \bm{C}) \cdot \mathcal{V}_L(\bm{\overline{A}})
\qquad \qquad
\text{where } \mathcal{V}_L(\bm{\overline{A}})_{n, \ell} = \bm{\overline{A}}_n^\ell
\end{equation}
where \( \circ \) is Hadamard product, \( \cdot \) is matrix multiplication, and \( \mathcal{V} \) is known as a \textbf{Vandermonde matrix}.
Unpacking this a little more, we can write \( \bm{\overline{K}} \) as the following Vandermonde matrix-vector multiplication.
\begin{align*}
  \bm{\overline{K}} =
  \begin{bmatrix}
    \bm{\overline{B}}_0 \bm{C}_0 & \dots & \bm{\overline{B}}_{N-1} \bm{C}_{N-1}
  \end{bmatrix}
  \begin{bmatrix}
    1      & \bm{\overline{A}}_0     & \bm{\overline{A}}_0^2     & \dots  & \bm{\overline{A}}_0^{L-1}     \\
    1      & \bm{\overline{A}}_1     & \bm{\overline{A}}_1^2     & \dots  & \bm{\overline{A}}_1^{L-1}     \\
    \vdots & \vdots                  & \vdots                    & \ddots & \vdots                        \\
    1      & \bm{\overline{A}}_{N-1} & \bm{\overline{A}}_{N-1}^2 & \dots  & \bm{\overline{A}}_{N-1}^{L-1} \\
  \end{bmatrix}
\end{align*}

\paragraph{Time and Space Complexity}

The naive way to compute \eqref{eq:vandermonde} is by materializing the Vandermonde matrix \( \mathcal{V}_L(\bm{\overline{A}}) \) and performing a matrix multiplication,
which requires \( O(NL) \) time and space.

However, Vandermonde matrices are well-studied and theoretically the multiplication can be computed in
\( \widetilde{O}(N+L) \) operations and \( O(N+L) \) space.
In fact, Vandermonde matrices are closely related to Cauchy matrices, which are the computational core of S4's DPLR algorithm,
and have identical complexity \citep{pan2001structured}.

\begin{proposition}%
  The time and space complexity of computing the kernel of diagonal SSMs is equal to that of computing DPLR SSMs.
\end{proposition}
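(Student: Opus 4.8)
The plan is to reduce both the diagonal and the DPLR kernel computations to a single shared primitive---evaluating the (truncated) rational generating function of the kernel at the $L$-th roots of unity, followed by one inverse FFT---and then to argue that this primitive is in both cases a Cauchy matrix--vector multiplication of the same size. Equality of the complexities will then follow because the two problems differ only by a low-rank (Woodbury) correction contributing a constant factor, together with the structured-matrix equivalence between Vandermonde and Cauchy operations.

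First I would set up the generating-function reduction for the diagonal case. Starting from $\bm{\overline{K}}_\ell = \sum_n \bm{C}_n \bm{\overline{A}}_n^\ell \bm{\overline{B}}_n$ in \eqref{eq:vandermonde}, summing the power series $\sum_{\ell=0}^{L-1} \bm{\overline{K}}_\ell z^\ell$ produces a partial-fraction object of the form $\sum_n \frac{\bm{C}_n \bm{\overline{B}}_n}{1 - \bm{\overline{A}}_n z}$ (up to the degree-$L$ truncation). Evaluating this at the points $z = \omega^j$, with $\omega = e^{-2\pi i / L}$, is exactly multiplication by the Cauchy-type matrix with entries $(1 - \bm{\overline{A}}_n \omega^j)^{-1}$; by the structured-matrix algorithms of \citet{pan2001structured} this costs $\widetilde{O}(N+L)$ time and $O(N+L)$ space, and a length-$L$ inverse FFT costing $O(L\log L)$ then recovers the coefficients $\bm{\overline{K}}_\ell$. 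This both realizes the $\widetilde{O}(N+L)$ bound claimed in the text and exhibits the diagonal kernel as a single Cauchy multiply---which is also the route by which the direct Vandermonde/Cauchy equivalence of \citet{pan2001structured} applies.

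Next I would recall S4's DPLR algorithm \citep{gu2022efficiently} and cast it in the same frame. There the relevant object is the generating function of the rank-corrected resolvent; the Woodbury / matrix-inversion identity collapses it to a constant number of evaluations of the \emph{same} Cauchy form $(\omega^j - \bm{\overline{A}}_n)^{-1}$, after which the identical inverse FFT recovers the kernel. Since only $O(1)$ Cauchy multiplies of the same size are involved, the DPLR cost is also $\widetilde{O}(N+L)$ time and $O(N+L)$ space. The two-sided comparison is then immediate: a diagonal SSM is the rank-$0$ special case of a DPLR SSM, so the diagonal cost is at most the DPLR cost, while the shared reduction to the common Cauchy primitive shows the DPLR cost exceeds the diagonal cost by only a constant factor. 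Hence the two complexities agree.

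The main obstacle I anticipate is not the diagonal side but making the ``same primitive'' claim airtight across the two reductions. Concretely, one must verify that (i) the finite-$L$ truncation of the generating function does not destroy the Cauchy structure or its fast multiplication, and (ii) the Woodbury correction in the DPLR case genuinely reduces to $O(1)$ Cauchy evaluations rather than concealing extra structure whose size scales with the displacement rank. Both are standard consequences of the displacement-rank theory relating Vandermonde, Cauchy, and Toeplitz-like matrices, so I would lean on \citet{pan2001structured} for the fast Cauchy/Vandermonde equivalence rather than re-deriving the multiplication algorithm, keeping the argument at the level of a reduction.
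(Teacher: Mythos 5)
Your proposal is correct and rests on the same key facts as the paper's own (informal, inline) justification: the diagonal kernel is a single structured Vandermonde/Cauchy matrix--vector product, the DPLR kernel reduces via the Woodbury identity to a constant number of Cauchy products of the same size, and both cost $\widetilde{O}(N+L)$ time and $O(N+L)$ space by the displacement-rank equivalence cited from \citet{pan2001structured}. The only difference is presentational: the paper invokes the Vandermonde--Cauchy equivalence abstractly, whereas you instantiate it concretely by evaluating the truncated generating function at the $L$-th roots of unity, which is precisely the transformation underlying that equivalence and matches how S4's DPLR algorithm actually proceeds.
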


We note that on modern parallelizable hardware such as GPUs, a simple fast algorithm is to compute \eqref{eq:vandermonde} with naive summation (using \( O(NL) \) operations),
but without materializing the Vandermonde matrix (using \( O(N+L) \) space).
Just as with S4, this may require implementing a custom kernel in some modern deep learning frameworks such as PyTorch to achieve the space savings.

\subsection{Parameterization}
\label{sec:method:param}

The next question is how to represent the parameters \( \bm{A}, \bm{B}, \bm{C} \).

\para{Parameterization of \( \bm{A} \).}
\looseness=-1
Note that the kernel \( K(t) = \bm{C} e^{t\bm{A}} \bm{B} \) blows up to \( \infty \) as \( t \to \infty \) if \( \bm{A} \) has any eigenvalues with positive real part.
\citet{goel2022sashimi} found that this is a serious constraint that affects the stability of the model, especially when using the SSM as an autoregressive generative model.
They propose to force the real part of \( \bm{A} \) to be negative,
also known as the left-half plane condition in classical controls,
by parameterizing the real part inside an exponential function
\(
\bm{A} = -\exp(\bm{A}_{Re}) + i \cdot \bm{A}_{Im}.
  \)

We note that instead of \( \exp \), any activation function can be used as long as its range is bounded on one side, such as ReLU, softplus, etc. %
The original DSS does not constrain the real part of \( \bm{A} \),
which is sufficient for simple tasks involving fixed-length sequences,
but could become unstable in other settings.

\para{Parameterization of \( \bm{B}, \bm{C} \).}
\looseness=-1
Another choice in the parameterization is how to represent \( \bm{B} \) and \( \bm{C} \).
Note that the computation of the final discrete convolution kernel \( \bm{\overline{K}} \) depends only on the elementwise product \( \bm{B} \circ \bm{C} \) (equation \eqref{eq:vandermonde}).
Therefore DSS chose to parameterize this product directly, which they call \( \bm{W} \), instead of \( \bm{B} \) and \( \bm{C} \) individually.

However, we observe that this is equivalent to keeping independent \( \bm{B} \) and \( \bm{C} \), and simply freezing \( \bm{B}=\bm{1} \) while training \( \bm{C} \).
Therefore, just as S4 has separate parameters \( \bm{A} \), \( \bm{B} \), and \( \bm{C} \) and uses a fixed initialization for \( \bm{A} \) and \( \bm{B} \),
S4D also proposes separate \( \bm{A}, \bm{B} \), and \( \bm{C} \) and uses fixed initializations for \( \bm{A} \) (discussed in \cref{sec:initialization}) and \( \bm{B} \) (set to \( \bm{1} \)).
Then the difference between S4D and DSS is simply that DSS does not train \( \bm{B} \).
In our ablations, we show that training \( \bm{B} \) gives a minor but consistent improvement in performance.

As described in \citep{gu2022hippo}, S4 initializes \( \bm{C} \) randomly with standard deviation \( 1 \) (in contrast to standard deep learning initializations, which scale with the dimension e.g. \( N^{-\frac{1}{2}} \)), which is variance-preserving for S4's \( (\bm{A}, \bm{B}) \) as a consequence of the HiPPO theory.
Because it turns out that the diagonal approximation to HiPPO has similar theoretical properties, we retain this initialization in the diagonal case.

\para{Conjugate Symmetry.}
Finally, we make note of a minor parameterization detail originally used in S4.
Note that we ultimately care about sequence transformations over \emph{real} numbers.
For example, HiPPO defines real \( (\bm{A}, \bm{B}) \) matrices, and the base definition of S4 is a real SSM that is a map on sequences of real numbers.
In this case, note that the state \( x \) at any time is a vector \( \mathbbm{R}^N \), and similarly \( \bm{B} \) and \( \bm{C} \) would consist of \( N \) real parameters.

However, if using complex numbers, this effectively doubles the state dimension and the number of parameters in \( \bm{B}, \bm{C} \).
Furthermore, when using a complex SSM, the output of the SSM is not guaranteed to be real even if the input is real, and similarly the convolution kernel \eqref{eq:vandermonde} will in general be complex.

To resolve this discrepency, note that when diagonalizing a real SSM into a complex SSM (see \cref{prop:diagonalize}),
the resulting parameters always occur in \emph{conjugate pairs}.
Therefore we can throw out half of the parameters.

In other words, to parameterize a real SSM of state size \( N \), we can instead parameterize a complex SSM of state size \( \frac{N}{2} \), and implicitly add back the conjugate pairs of the parameters. This ensures that the total state size and parameter count is actually the equivalent of \( N \) real numbers, and also guarantees that the output of the kernel is real.
The implementation of this is very simple; the sum in \eqref{eq:vandermonde} will implicitly include the conjugate pairs of \( \bm{A}, \bm{B}, \bm{C} \) and therefore resolve to twice the real part of the original sum.

\subsection{S4D: the Diagonal Version of S4}
\label{sec:s4d}

A key component of our exposition is disentangling the various choices possible in representing and computing state space models.
With this categorization, different choices can be mixed and matched to define variants of the core method.
\cref{tab:parameterization} compares S4, DSS, and S4D, which have a core structure and kernel computation, but have various choices of other aspects of the parameterization.

\begin{table}[!ht]
  \small
  \caption{(\textbf{Parameterization choices for Structured SSMs}.) Aside from the core structure of \( \bm{A} \) and the computation of its convolution kernel, SSMs have several design choices which are consolidated in S4D.}
  \begin{tabular}{@{}lllllll@{}}
    \toprule
    Method       & Structure & Kernel Computation & Discretization & Constraint \( \Re(\bm{A}) \)   & Trainable \( \bm{B} \) & Initialization of \( \bm{A} \) \\
    \midrule
    \textbf{S4}  & DPLR      & Cauchy    & Bilinear       & \( \exp \)                     & Yes                    & HiPPO                          \\
    \textbf{DSS} & diagonal  & softmax          & ZOH            & id (none)                      & No                     & HiPPO-D                        \\
    \textbf{S4D} & diagonal  & Vandermonde      & either        & \( \exp \) / \( \text{ReLU} \) & optional                 & various                        \\
    \bottomrule
  \end{tabular}
  \label{tab:parameterization}
\end{table}

\para{Comparison to S4 and DSS.}

We will define the base version of S4D to match the parameterization of S4 (i.e. bilinear discretization, \( \Re(\bm{A}) \) parameterized with \( \exp \), trainable \( \bm{B} \), and HiPPO-D initialization), but many other variants are possible.
Note that unlike DSS, the output of S4D would be \emph{exactly the same as masking out the low-rank component of S4's DPLR representation}.
Thus comparing S4D vs. S4 is a comparison of diagonal vs. DPLR representations of \( \bm{A} \) while controlling all other factors.
In our empirical study in \cref{sec:experiments},
we systematically ablate the effects of each of these components.

We elaborate more on the comparisons between S4, DSS, and S4D below.

\para{Kernel computation.}

The original S4 work briefly considered the diagonal case as motivation \citep[Section 3.1]{gu2022efficiently}, and explicitly mentioned the connection to Vandermonde products and the computational complexity of diagonal SSMs.
However, their focus was the more complex DPLR representation because it is difficult to find a performant diagonal state matrix.
Compared to S4, we fleshed out details of the Vandermonde connection and its computational complexity, which matches that of S4.

On the other hand, DSS empirically found an effective diagonal state matrix, but introduced a more complicated method based on a \textbf{complex softmax} for computing it.
Compared to S4D, this softmax essentially \emph{normalizes} by the row-sums of the Vandermonde matrix, so we may sometimes refer to this distinction as ``softmax normalization''.
This makes the kernel more complicated than necessary, and has a few concrete drawbacks.
First, the row-normalization effectively makes the model dependent on a particular sequence length \( L \), and special logic is required to handle different sequence lengths.
Second, it does not expose the optimal computational complexity of the method, and the original version of DSS in fact uses \( O(N) \) more memory in the kernel construction than S4(D).%
\footnote{An early version of DSS claimed that it did not require a custom kernel while S4 does, but this is because of its extra memory usage. The PyTorch implementation of S4 has an optional custom CUDA kernel primarily to save this factor of \( N \) in space.}

\para{Discretization.}

S4D disentangles the discretization method from the kernel computation (equation \eqref{eq:vandermonde}), so that any discretization can be used,
whereas previous methods required a specific discretization.
For example, DSS requires the zero-order hold (ZOH) discretization because the \( \exp \) term in the ZOH formula lends itself to be computed with a softmax.
On the other hand, when \( \bm{A} \) is not diagonal, ZOH involves a matrix exponential which can be slower to compute, so S4 uses the bilinear discretization which can be computed efficiently for DPLR matrices.

\para{Eigenvalue constraint.}

All methods can enforce any constraint on the eigenvalues of \( \bm{A} \). While DSS found that letting them be unconstrained has slightly better performance, our experiments find that the difference is negligible and we recommend contraining negative real part of \( \bm{A} \) as is standard practice in control systems.
This ensures stability even in unbounded autoregressive settings.

\para{The full model.}

The entire S4D method is very straightforward to implement, requiring just a few lines of code each for the parameterization and initialization, kernel computation, and full forward pass (\cref{listing}).
This minimal model maps an input sequence of length \( L \) to an output of the same length; given multiple input channels, independent S4D layers are broadcast over them.
Other details such as the initialization of \( \dt \) and other components of the overall neural network architecture are the same as in S4 and DSS.

\begin{listing}[ht]
\begin{minted}[fontsize=\footnotesize]{python}
def parameters(N, dt_min=1e-3, dt_max=1e-1):
  # Initialization
  log_dt = np.random.rand() * (np.log(dt_max)-np.log(dt_min)) + np.log(dt_min) # Geometrically uniform timescale
  A = -0.5 + 1j * np.pi * np.arange(N//2) # S4D-Lin initialization
  B = np.ones(N//2) + 0j
  C = np.random.randn(N//2) + 1j * np.random.randn(N) # Variance preserving initialization
  return log_dt, np.log(-A.real), A.imag, B, C

def kernel(L, log_dt, log_A_real, A_imag, B, C):
  # Discretization (e.g. bilinear transform)
  dt, A = np.exp(log_dt), -np.exp(log_A_real) + 1j * A_imag
  dA, dB = (1+dt*A/2) / (1-dt*A/2), dt*B / (1-dt*A/2)

  # Computation (Vandermonde matrix multiplication - can be optimized)
  # Return twice the real part - same as adding conjugate pairs
  return 2 * ((B*C) @ (dA[:, None] ** np.arange(L))).real

def forward(u, parameters):
    L = u.shape[-1]
    K = kernel(L, *parameters)
    # Convolve y = u * K using FFT
    K_f, u_f = np.fft.fft(K, n=2*L), np.fft.fft(u, n=2*L)
    return np.fft.ifft(K_f*u_f, n=2*L)[..., :L]

\end{minted}
\caption{Full Numpy example of the parameterization and computation of a 1-dimensional S4D-Lin model}
\label{listing}
\end{listing}

Finally, note that different combinations of parameterization choices can lead to slightly different implementations of the kernel.
\cref{fig:1} illustrates the S4D kernel with ZOH discretization which can be simplified even further to just \emph{2 lines of code}.

\section{Initialization of Diagonal State Matrices}
\label{sec:initialization}

The critical question remains: which diagonal state matrices \( \bm{A} \) are actually effective?
We comment on the limitations of diagonal SSMs,
and then provide three instantiations of S4D that perform well empirically.

\para{Expressivity and Limitations of Diagonal SSMs.}
We first present a simplified view on the expressivity of diagonal SSMs mentioned by \citep{gupta2022diagonal}.
First, it is well-known that almost all matrices diagonalize over the complex plane.
Therefore it is critical to use complex-valued matrices in order to use diagonal SSMs.
\begin{proposition}%
  \label{prop:diagonalize}
  The set \( \mathcal{D} \subset \mathbbm{C}^{N \times N}  \) of diagonalizable matrices is dense in \( \mathbbm{C}^{N \times N} \), and has full measure (i.e. its complement has measure \( 0 \)).
\end{proposition}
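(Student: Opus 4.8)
The plan is to reduce the statement to the classical fact that any matrix with $N$ distinct eigenvalues is diagonalizable, and then to control the matrices that fail this condition via the discriminant of the characteristic polynomial. Let $\mathcal{S} \subseteq \mathbbm{C}^{N \times N}$ be the set of matrices with $N$ distinct eigenvalues; since distinct eigenvalues yield linearly independent eigenvectors, $\mathcal{S} \subseteq \mathcal{D}$. It therefore suffices to prove that $\mathcal{S}$ is dense and of full measure, after which the proposition is immediate.

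First I would describe $\mathcal{S}^c$ algebraically. The characteristic polynomial $p_A(\lambda) = \det(\lambda I - A)$ is monic of degree $N$ with coefficients that are polynomials in the $N^2$ entries of $A$. A matrix lies in $\mathcal{S}^c$ precisely when $p_A$ has a repeated root, i.e. when its discriminant vanishes. Because the discriminant of a polynomial is itself a polynomial in its coefficients, the map $g(A) := \operatorname{disc}(p_A)$ is a polynomial in the entries of $A$, and $\mathcal{S}^c = \{A : g(A) = 0\}$ is its zero set.

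The only step with genuine content is checking that $g$ is not identically zero: any diagonal matrix with distinct diagonal entries lies in $\mathcal{S}$, so $g$ does not vanish everywhere. It then remains to invoke the standard fact that the zero set of a nonzero polynomial on $\mathbbm{C}^{N \times N} \cong \mathbbm{R}^{2N^2}$ is closed, has empty interior, and has Lebesgue measure zero. The mild subtlety here --- the part I would be most careful about --- is that $g$ is complex-valued, so I would split it as $g = \Re g + i\,\Im g$. Since $g \not\equiv 0$, at least one of these real polynomials is not identically zero, and $\mathcal{S}^c = \{\Re g = 0\} \cap \{\Im g = 0\}$ is contained in the zero set of that nonzero real polynomial. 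A nonzero real polynomial cannot vanish on any open ball (otherwise all of its partial derivatives would vanish there and it would be identically zero), so its zero set is closed with empty interior, and a standard induction on the number of variables via Fubini's theorem shows it has measure zero.

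Finally I would assemble the pieces: $\mathcal{S}^c$ is contained in a closed, empty-interior, measure-zero set, so $\mathcal{S}$ is open, dense, and of full measure. As $\mathcal{S} \subseteq \mathcal{D}$, the set $\mathcal{D}$ is a fortiori dense with complement of measure zero, proving the claim. All the real work lies in the density and measure statements for polynomial zero sets; everything else is bookkeeping around the discriminant.
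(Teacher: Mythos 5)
Your proof is correct: the reduction to matrices with distinct eigenvalues, the identification of the exceptional set as the zero locus of the discriminant of the characteristic polynomial (a nonzero polynomial in the entries, as witnessed by any diagonal matrix with distinct entries), and the splitting of the complex polynomial into real and imaginary parts before invoking the standard density and measure-zero facts for real polynomial zero sets are all sound. The paper itself gives no proof of this proposition --- it is stated as a classical, well-known fact --- and the argument you give is precisely the standard one that would be supplied.
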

It is also well known that the state space \( (\bm{A}, \bm{B}, \bm{C}) \) is exactly equivalent to (i.e. expresses the same map \( u \mapsto y \)) the state space \( (\bm{V}^{-1}\bm{A}\bm{V}, \bm{V}^{-1}\bm{B}, \bm{C}\bm{V}) \), known in the SSM literature as a state space transformation.
Therefore \cref{prop:diagonalize} says that \emph{(almost) all SSMs are equivalent to a diagonal SSM}.

However, we emphasize that \cref{prop:diagonalize} is about \emph{expressivity} which does not guarantee strong performance of a trained model after optimization.
For example, \citet{gu2022efficiently} and \citet{gupta2022diagonal} show that parameterizing \( \bm{A} \) as a dense real matrix or diagonal complex matrix, which are both fully expressive classes, performs poorly if randomly initialized.

Second, \cref{prop:diagonalize} does not take into account numerical representations of data,
which was the original reason S4 required a low-rank correction term instead of a pure diagonalization \citep[Lemma 3.2]{gu2022efficiently}.
In \cref{sec:experiments:init}, we also show that two different initializations with the \emph{same spectrum} (i.e., are equivalent to the same diagonal \( \bm{A} \)) can have very different performance.

\para{S4D-LegS.}
\looseness=-1
The HiPPO-LegS matrix has DPLR representation \( \bm{A}^{(D)} - \bm{P}\bm{P}^\top \),
and \citet{gupta2022diagonal} showed that simply approximating it with \( \bm{A}^{(D)} \) works quite well~\eqref{eq:hippo-legsd}.
Our first result is providing a clean mathematical interpretation of this method.
\cref{thm:legsd} shows a surprising fact that does not hold in general for DPLR matrices (\cref{sec:theory-details:proofs}), and arises out of the special structure of this particular matrix.
\begin{theorem}%
  \label{thm:legsd}
  Let \( \bm{A} = \bm{A}^{(N)} - \bm{P}\bm{P}^\top \) and \( \bm{B} \) be the HiPPO-LegS matrices, and \( K_{\bm{A}, \bm{B}}(t) \) be its basis.
  As the state size \( N \to \infty \), the SSM basis \( K_{\bm{A}^{(N)}, \bm{B}/2}(t) \) limits to \( K_{\bm{A}, \bm{B}}(t) \) (\cref{fig:basis-legsd}).
\end{theorem}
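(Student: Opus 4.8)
The plan is to move to the Laplace (frequency) domain, where comparing the two vector-valued bases collapses to controlling a single scalar. The structural fact that makes this work is that the low-rank term is aligned with \( \bm{B} \): since \( \bm{P}_n = (n+\tfrac12)^{1/2} = (2n+1)^{1/2}/\sqrt2 = \bm{B}_n/\sqrt2 \), we have \( \bm{P}\bm{P}^\top = \tfrac12\,\bm{B}\bm{B}^\top \), so \( \bm{A} = \bm{A}^{(N)} - \tfrac12 \bm{B}\bm{B}^\top \) is a rank-one update of the normal part. Writing \( y = K_{\bm{A},\bm{B}} = e^{t\bm{A}}\bm{B} \) and \( z = K_{\bm{A}^{(N)},\bm{B}/2} = \tfrac12 e^{t\bm{A}^{(N)}}\bm{B} \), their Laplace transforms are \( \hat y(\lambda) = (\lambda\bm{I}-\bm{A})^{-1}\bm{B} \) and \( \hat z(\lambda) = \tfrac12 (\lambda\bm{I}-\bm{A}^{(N)})^{-1}\bm{B} \). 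Applying Sherman--Morrison to the rank-one update, with \( \bm{R} = (\lambda\bm{I}-\bm{A}^{(N)})^{-1} \) and the scalar \( g_N(\lambda) = \bm{B}^\top\bm{R}\bm{B} \), gives \( \hat y = (1+\tfrac12 g_N)^{-1}\bm{R}\bm{B} \) while \( \hat z = \tfrac12\bm{R}\bm{B} \). The key point is that \( \hat y \) and \( \hat z \) are \emph{parallel}: \( \hat z = \tfrac12(1+\tfrac12 g_N)\,\hat y \), so the entire theorem reduces to the scalar claim \( g_N(\lambda)\to 2 \).

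To evaluate \( g_N \) I would exploit that the full HiPPO matrix \( \bm{A} \) in \eqref{eq:hippo-legs} is lower-triangular, solving \( (\lambda\bm{I}-\bm{A})w = \bm{B} \) by forward substitution. Setting \( T_n := 1 - \sum_{k<n}(2k+1)^{1/2} w_k \), the recurrence telescopes to \( T_{n+1} = T_n\,\frac{\lambda-n}{\lambda+n+1} \) with \( T_0 = 1 \), yielding the closed product \( T_n = \prod_{j=0}^{n-1}\frac{\lambda-j}{\lambda+j+1} \), the per-coordinate transform \( \hat y_n = \frac{(2n+1)^{1/2} T_n}{\lambda+n+1} \) (independent of \( N \) once \( N>n \)), and the telescoping sum \( h_N(\lambda) := \bm{B}^\top\hat y = \sum_{n<N}(T_n - T_{n+1}) = 1 - T_N \). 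Since Sherman--Morrison also gives \( \bm{B}^\top\hat y = g_N/(1+\tfrac12 g_N) = h_N \), I can solve \( g_N = h_N/(1-\tfrac12 h_N) = \frac{2(1-T_N)}{1+T_N} \).

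It then remains to take \( N\to\infty \). For fixed \( \lambda \) with \( \Re\lambda > -\tfrac12 \), each factor obeys \( \bigl|\frac{\lambda-j}{\lambda+j+1}\bigr| = 1 - \frac{1+2\Re\lambda}{j} + O(j^{-2}) \), so the product \( T_N\to 0 \); hence \( h_N\to 1 \) and \( g_N\to 2 \), which by the reduction above settles the transform-level statement. In fact the two steps combine into an exact error formula \( \hat z_n - \hat y_n = -\frac{T_N}{1+T_N}\,\hat y_n \), which vanishes locally uniformly in \( \lambda \). The last step transfers this back to the time domain: because \( \hat y_n \) is a fixed proper rational function and the error factor \( T_N \) tends to \( 0 \) with its poles confined to \( \Re\lambda \le -1 \), an inverse-Laplace (residue) argument yields \( z_n(t)\to y_n(t) \) uniformly on compact \( t \)-intervals, for each fixed coordinate \( n \).

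The algebra is short once it is set up, so the two conceptual hinges are (i) the alignment \( \bm{P} = \bm{B}/\sqrt2 \) combined with Sherman--Morrison, which makes the two bases parallel in frequency so that only one scalar must be tracked, and (ii) the lower-triangular structure of \( \bm{A} \), which produces the telescoping product. I expect the \textbf{main obstacle} to be this final step rather than the computation: upgrading the pointwise-in-\( \lambda \) convergence of the Laplace transforms to genuine (locally uniform) convergence of the time-domain basis functions, which requires uniform tail control of \( T_N \) and of the inversion contour rather than a bare pointwise limit. It is also worth noting that both hinges are special to this matrix---\( \bm{P}\parallel\bm{B} \) and triangularity---so the argument is not expected to extend to generic DPLR matrices, consistent with the theorem's remark.
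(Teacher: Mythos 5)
Your proposal is correct in its algebraic core but takes a genuinely different route from the paper's. The paper works entirely in the time domain and leans on the HiPPO interpretation: by \cref{thm:legs} the kernels \( K_n(t) \) form a complete orthonormal system for the measure \( e^{-t} \), so in the limit \( N \to \infty \) the state reconstructs the input exactly, \( u(t) = \bm{B}^\top x(t) \); substituting \( \frac{1}{2}\bm{B}u = \frac{1}{2}\bm{B}\bm{B}^\top x = \bm{P}\bm{P}^\top x \) into the ODE then converts \( (\bm{A}, \bm{B}) \) into \( (\bm{A}^{(N)}, \bm{B}/2) \) in three lines. You exploit the same structural alignment \( \bm{P} = \bm{B}/\sqrt{2} \), but in the Laplace domain via Sherman--Morrison together with the triangular forward solve, and this buys something the paper's argument does not: an exact finite-\( N \) error formula \( \hat z_n - \hat y_n = -\frac{T_N}{1+T_N}\hat y_n \) with \( T_N = \prod_{j<N}\frac{\lambda-j}{\lambda+j+1} \), which quantifies the convergence rate and incidentally locates the spectrum of \( \bm{A}^{(N)} \) on \( \Re\lambda = -\frac{1}{2} \) (the zeros of \( 1+T_N \)), consistent with the paper's proposition that \( \Re(\bm{A}^{(D)}) = -\frac{1}{2}\bm{1} \). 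Both arguments are heuristic at the final limiting step: the paper substitutes the infinite-\( N \) reconstruction identity into the finite-\( N \) dynamics without controlling the truncation error, and you correctly flag that Laplace inversion needs uniform contour control --- indeed \( T_N/(1+T_N) \to \frac{1}{2} \) as \( \abs{\Im\lambda} \to \infty \) while \( \hat y_n = \Theta(\abs{\lambda}^{-1}) \) is not absolutely integrable on a vertical line, so the naive bound fails and a contour deformation or \( L^2 \)/residue argument is genuinely required there. Since the paper's own proof is no more rigorous on this point, your reduction to the scalar limit \( g_N \to 2 \) is a legitimate and arguably more quantitative alternative; what each approach buys is, respectively, interpretability (the paper's reconstruction picture explains \emph{why} this is special to HiPPO-LegS) versus explicit error control (your \( T_N \) formula).
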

Note that \( \bm{A}^{(N)} \) is then \emph{unitarily} equivalent to \( \bm{A}^{(D)} \), which preserves the stability and timescale~\citep{gu2022hippo} of the system.

We define \textbf{S4D-LegS} to be the S4D method for this choice of diagonal \( \bm{A}=\bm{A}^{(D)} \).
\cref{thm:legsd} explains the empirical results in \citep{gupta2022diagonal} whereby this system performed quite close to S4, but was usually slightly worse. %
This is because DSS is a variant of S4D-LegS, which by \cref{thm:legsd} is a noisy approximation to S4-LegS. %
\cref{fig:basis-legsd} illustrates this result, and also shows a curious phenomenon involving different discretization rules that is open for future work.

\para{S4D-Inv.}
To further simplify S4D-LegS, we analyze the structure of \( \bm{A}^{(D)} = \diag\langle \bm{A} \rangle \) in more detail.
The real part is easy to understand, which follows from the analysis in \citep{gu2022efficiently}:
\begin{proposition}%
  \( \Re(\bm{A}) = -\frac{1}{2}\bm{1} \)
\end{proposition}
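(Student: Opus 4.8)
The plan is to show that the normal-HiPPO matrix $\bm{A}^{(N)}$ is, up to the scalar shift $-\tfrac12\bm{I}$, a real \emph{skew-symmetric} matrix, so that all of its eigenvalues (the diagonal entries of $\bm{A}^{(D)} = \operatorname*{eig}(\bm{A}^{(N)})$) automatically have real part $-\tfrac12$. This is exactly the structure recorded in the normal/low-rank decomposition of \citep{gu2022efficiently}, which is why the proposition is attributed to that analysis.

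First I would recover the explicit entries of $\bm{A}^{(N)}$. From \eqref{eq:hippo-legsd} we have $\bm{A}^{(N)} = \bm{A} + \bm{P}\bm{P}^\top$ with $\bm{P}_n = (n+\tfrac12)^{1/2}$, so $(\bm{P}\bm{P}^\top)_{nk} = (n+\tfrac12)^{1/2}(k+\tfrac12)^{1/2}$. Using $(2n+1)^{1/2} = \sqrt 2\,(n+\tfrac12)^{1/2}$ to rewrite the HiPPO-LegS entries \eqref{eq:hippo-legs}, a short computation gives on the diagonal $\bm{A}^{(N)}_{nn} = -(n+1) + (n+\tfrac12) = -\tfrac12$; for $n>k$, $\bm{A}^{(N)}_{nk} = -2(n+\tfrac12)^{1/2}(k+\tfrac12)^{1/2} + (n+\tfrac12)^{1/2}(k+\tfrac12)^{1/2} = -(n+\tfrac12)^{1/2}(k+\tfrac12)^{1/2}$; and for $n<k$, where the HiPPO term vanishes, $\bm{A}^{(N)}_{nk} = +(n+\tfrac12)^{1/2}(k+\tfrac12)^{1/2}$. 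The key observation is then that the off-diagonal entries satisfy $\bm{A}^{(N)}_{nk} = -\bm{A}^{(N)}_{kn}$ for $n\neq k$, while the diagonal is the constant $-\tfrac12$. Hence I can write $\bm{A}^{(N)} = -\tfrac12\bm{I} + \bm{S}$ with $\bm{S}$ real and skew-symmetric ($\bm{S}^\top = -\bm{S}$); in particular $\bm{A}^{(N)}$ is normal, the property invoked by the alternate name ``skew-HiPPO.'' To finish, take any eigenpair $\bm{A}^{(N)}\bm{v} = \lambda\bm{v}$ with $\norm{\bm{v}} = 1$. Since $\bm{A}^{(N)}$ is real, its Hermitian part is $\tfrac12\paren{\bm{A}^{(N)} + (\bm{A}^{(N)})^\top} = -\tfrac12\bm{I}$, so $\Re(\lambda) = \Re(\bm{v}^* \bm{A}^{(N)} \bm{v}) = \bm{v}^* \cdot \tfrac12\paren{\bm{A}^{(N)} + (\bm{A}^{(N)})^*}\bm{v} = -\tfrac12$. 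As $\bm{A}^{(D)}$ collects exactly these eigenvalues, every diagonal entry has real part $-\tfrac12$, giving $\Re(\bm{A}) = -\tfrac12\bm{1}$.

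There is no serious obstacle here; the content is entirely in the bookkeeping of the first step. The one point requiring care is the sign asymmetry between the $n>k$ and $n<k$ off-diagonal blocks: the strictly-upper-triangular entries of HiPPO-LegS are zero, so adding the symmetric rank-one term $\bm{P}\bm{P}^\top$ flips the sign only in the upper triangle relative to the lower one, which is precisely what produces a skew-symmetric $\bm{S}$ rather than a symmetric matrix. Getting this sign right is what distinguishes the genuinely normal structure that makes the proposition true from the (incorrect) purely symmetric reading of the entry formula. Equivalently, one may avoid eigenvectors entirely and invoke Bendixson's theorem: the real parts of the eigenvalues of $\bm{A}^{(N)}$ lie between the extreme eigenvalues of its Hermitian part $-\tfrac12\bm{I}$, both of which equal $-\tfrac12$, forcing every $\Re(\lambda) = -\tfrac12$.
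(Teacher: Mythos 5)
Your proof is correct and follows the same route the paper relies on: the proposition is stated as following from the analysis in \citet{gu2022efficiently}, which is precisely the decomposition $\bm{A}^{(N)} = -\tfrac12\bm{I} + \bm{S}$ with $\bm{S}$ real skew-symmetric that you derive, after which the eigenvalues necessarily have real part $-\tfrac12$. You are also right to insist on the positive sign of the $n<k$ entries (forced by $\bm{A}=\bm{A}^{(N)}-\bm{P}\bm{P}^\top$): as displayed, equation \eqref{eq:hippo-legsd} reads as a symmetric matrix, and your sign correction is exactly what produces the skew-symmetric structure on which the proposition depends.
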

Let the imaginary part be sorted, i.e. \( \Im(\bm{A})_n \) is the \( n \)-th largest (positive) imaginary component.
We empirically deduced the following conjecture for the asymptotics of the imaginary part.
\begin{conjecture}%
  \label{conj:s4d-inv}
  As \( N \to \infty \), \( \Im(\bm{A})_0 \to \frac{1}{\pi}N^2 + c \) where \( c \approx 0.5236 \) is a constant.
  For a fixed \( N \), the other eigenvalues satisfy an inverse scaling in \( n \): \( \Im(\bm{A})_n = \Theta(n^{-1}) \).
\end{conjecture}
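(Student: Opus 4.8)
The plan is to identify the sorted imaginary parts $\Im(\bm{A})_n$ with the positive spectrum of an explicit skew-symmetric matrix, recognize that matrix as a quadrature discretization of an integral operator, and solve the limiting operator in closed form. By the preceding proposition every eigenvalue of the normal matrix $\bm{A}^{(N)}$ has real part $-\tfrac12$, so $S := \bm{A}^{(N)} + \tfrac12\bm{I}$ is normal with purely imaginary spectrum, hence real skew-symmetric, with entries $S_{nk} = \operatorname{sign}(k-n)\sqrt{(n+\tfrac12)(k+\tfrac12)}$; its eigenvalues are exactly $\pm i\,\Im(\bm{A})_n$. Writing $S = \bm{D}\bm{\Sigma}\bm{D}$ with $\bm{D} = \diag((n+\tfrac12)^{1/2})$ and $\bm{\Sigma}_{nk} = \operatorname{sign}(k-n)$, the key observation is that, with nodes $s_n = (n+\tfrac12)/N$ and weights $1/N$, the rescaled matrix $\tfrac{1}{N^2}S$ is precisely the midpoint-rule Nyström discretization of the integral operator $T$ on $L^2[0,1]$ with kernel $K(s,t) = \sqrt{st}\,\operatorname{sign}(t-s)$. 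I therefore expect $\Im(\bm{A})_n = N^2\nu_{n+1}\,(1+o(1))$, where $\pm i\nu_j$ are the eigenvalues of $T$.

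Next I would solve the eigenproblem for $T$ exactly. Setting $g(s) = \int_0^1 \sqrt{t}\,\operatorname{sign}(t-s)\,\phi(t)\dd t$ turns $T\phi = i\nu\phi$ into $\sqrt{s}\,g(s) = i\nu\,\phi(s)$; differentiating and eliminating $\phi$ yields the first-order ODE $g'(s) = \tfrac{2i}{\nu}\,s\,g(s)$, so $g(s) = g(0)\,e^{i s^2/\nu}$. The definition of $g$ forces the boundary relation $g(1) = -g(0)$, which gives the quantization $e^{i/\nu} = -1$, i.e. $\nu_j = \tfrac{1}{\pi(2j-1)}$ for $j \ge 1$. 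As a consistency check, $\sum_j 2\nu_j^2 = \tfrac{2}{\pi^2}\cdot\tfrac{\pi^2}{8} = \tfrac14 = \lim_N N^{-4}\norm{S}_F^2$. Combined with the first step this gives $\Im(\bm{A})_0 \approx \tfrac{N^2}{\pi}$ and, for fixed $N$, $\Im(\bm{A})_n \approx \tfrac{N^2}{\pi(2n+1)} = \Theta(n^{-1})$, which already establishes both the leading $\tfrac1\pi N^2$ term and the claimed inverse scaling in $n$.

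To make the limit rigorous I would invoke spectral convergence of Nyström/quadrature discretizations of compact operators: $K$ is bounded on $[0,1]^2$ and hence Hilbert–Schmidt, $T$ is skew-adjoint with simple discrete spectrum, and standard collectively-compact / Osborn-type eigenvalue convergence gives $\lambda_j(S)/N^2 \to \nu_j$ for each fixed $j$. The one subtlety is that $K$ jumps across the diagonal $s=t$, so I would check that the quadrature-error estimates apply to this bounded-but-discontinuous kernel rather than citing the smooth-kernel rates.

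The hard part is the additive constant $c$: the statement asserts the sharper claim $\Im(\bm{A})_0 = \tfrac{N^2}{\pi} + c + o(1)$ with $c \approx \pi/6$, i.e. it pins down the $O(1)$ finite-$N$ correction to the top eigenvalue, which the crude limit above does not resolve. Because the kernel is discontinuous on the diagonal, the usual smooth Euler–Maclaurin expansion of the quadrature error does not apply directly, and the jump contribution must be expanded separately. An appealing exact alternative is that the eigenequation for $S$ telescopes: writing $z = i\lambda$, the components satisfy the two-term recurrence $(n+\tfrac12)(z+n+\tfrac32)\,w_{n+1} = (n+\tfrac32)(z-n-\tfrac12)\,w_n$, solved by a ratio of $\Gamma$-functions, together with the summation constraint $\sum_n w_n = (1+2z)w_0$ coming from the first and last rows. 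A Stirling-type asymptotic analysis of this transcendental equation for $z \sim iN^2/\pi$ should in principle isolate $c$. Extracting its exact closed form is the main obstacle, and the reason the statement is posed only as a conjecture, whereas the three steps above deliver the leading $\tfrac1\pi N^2$ term and the $\Theta(n^{-1})$ decay cleanly.
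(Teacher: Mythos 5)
The paper does not actually prove this statement: it is posed as a conjecture, explicitly described as ``empirically deduced'' and supported only by the numerics in \cref{fig:s4d-scaling}, so there is no proof of record to compare yours against. Your proposal therefore goes beyond what the paper offers, and its core checks out. Writing \( \bm{A}^{(N)} = -\tfrac{1}{2}\bm{I} + S \) with \( S_{nk} = \operatorname{sign}(k-n)\sqrt{(n+\tfrac12)(k+\tfrac12)} \) is correct (the display in \eqref{eq:hippo-legsd} obscures the sign asymmetry, but this is what \( \bm{A} + \bm{P}\bm{P}^\top \) works out to); the identification of \( N^{-2}S \) with the midpoint Nystr\"om matrix of the kernel \( \sqrt{st}\,\operatorname{sign}(t-s) \) at nodes \( (n+\tfrac12)/N \) is exact; and the reduction to \( g'(s) = \tfrac{2i}{\nu} s\, g(s) \) with the boundary relation \( g(1) = -g(0) \) correctly yields \( \nu_j = 1/(\pi(2j-1)) \), with the Hilbert--Schmidt check \( \sum_j 2\nu_j^2 = \tfrac14 = \iint K^2 \) confirming that these exhaust the nonzero spectrum. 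This recovers the leading term \( \Im(\bm{A})_0 \sim N^2/\pi \) and the inverse law \( \Im(\bm{A})_n \approx N^2/(\pi(2n+1)) \), which is precisely the form the paper adopts for S4D-Inv in \eqref{eq:s4d-inv}.

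Two caveats keep this from settling the conjecture as stated. First, Nystr\"om/Osborn spectral convergence gives \( \Im(\bm{A})_n / N^2 \to \nu_{n+1} \) only for each \emph{fixed} index \( n \), whereas the conjecture's second clause is a claim about all \( n \) at fixed \( N \) (there are only about \( N/2 \) positive imaginary parts); eigenvalues with index comparable to \( N \) are not controlled by the operator limit, so the \( \Theta(n^{-1}) \) claim in that regime still rests on your exact two-term recurrence or on numerics. Second, as you acknowledge, the additive constant \( c \approx \pi/6 \) is an \( O(1) \) finite-\( N \) correction that the continuum limit cannot resolve, and the discontinuity of the kernel across the diagonal blocks the naive Euler--Maclaurin route to it. In short: your attempt supplies a plausible (and, with care about the discontinuous kernel, rigorizable) derivation of the leading asymptotics --- strictly more than the paper's empirical evidence --- but the statement remains a conjecture with respect to the constant \( c \) and the uniform-in-\( n \) scaling.
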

\cref{fig:s4d-scaling} empirically supports this conjecture. %
Based on \cref{conj:s4d-inv}, we propose the initialization S4D-Inv to use the following inverse-law diagonal matrix which closely approximates S4D-LegS.

\begin{minipage}{.6\linewidth}%
  \noindent
  \begin{equation}
    \label{eq:s4d-inv}
    (\textbf{S4D-Inv}) \quad
    \bm{A}_n = -\frac{1}{2} + i \frac{N}{\pi} \left( \frac{N}{2n+1}-1 \right)
  \end{equation}
\end{minipage}%
\begin{minipage}{.4\linewidth}%
  \begin{equation}
    \label{eq:s4d-lin}
    (\textbf{S4D-Lin}) \quad
    \bm{A}_n = -\frac{1}{2} + i \pi n
  \end{equation}
\end{minipage}

\para{S4D-Lin.}
While S4D-Inv can be seen as an approximation to the original S4-LegS, we propose an even simpler scaling law for the imaginary parts that can be seen as an approximation of S4-FouT (\citep{gu2022hippo}),
where the imaginary parts are simply the Fourier series frequencies (i.e. matches the diagonal part of the DPLR form of S4-FouT).
\cref{fig:1} (\emph{Right}) illustrates the S4D-Lin basis \( e^{t\bm{A}}\bm{B} \), which are simply damped Fourier basis functions.

\para{General Diagonal SSM Basis Functions.}
The empirical study in \cref{sec:experiments} performs many ablations of different diagonal initializations, showing that many natural variants of the proposed methods do not perform as well.
The overall guiding principles for the diagonal state matrix \( \bm{A} \) are twofold, which can be seen from the closed form of the basis functions \( K_n(t) = e^{t\bm{A}_n}\bm{B}_n \) (\cref{eq:ssm-basis}).

First, the real part of \( \bm{A}_n \) controls the decay rate of the function.
\( \bm{A}_n = -\frac{1}{2} \) is a good default that bounds the basis functions by the envelope \( e^{-\frac{t}{2}} \), giving a constant timescale (\cref{fig:1} (\emph{Right})).

Second, the imaginary part of \( \bm{A}_n \) controls the oscillating frequencies of the basis function.
Critically, these should be spread out, which explains why random initializations of \( \bm{A} \) do not perform well.
S4D-Inv and S4D-Lin use simple asymptotics for these imaginary components that provide interpretable bases.
We believe that alternative initializations that have different mathematical interpretations may exist, which is an interesting question for future work.

\begin{figure}[!t]
  \centering
\begin{subfigure}{.48\linewidth}%
  \centering
  \includegraphics[width=\linewidth]{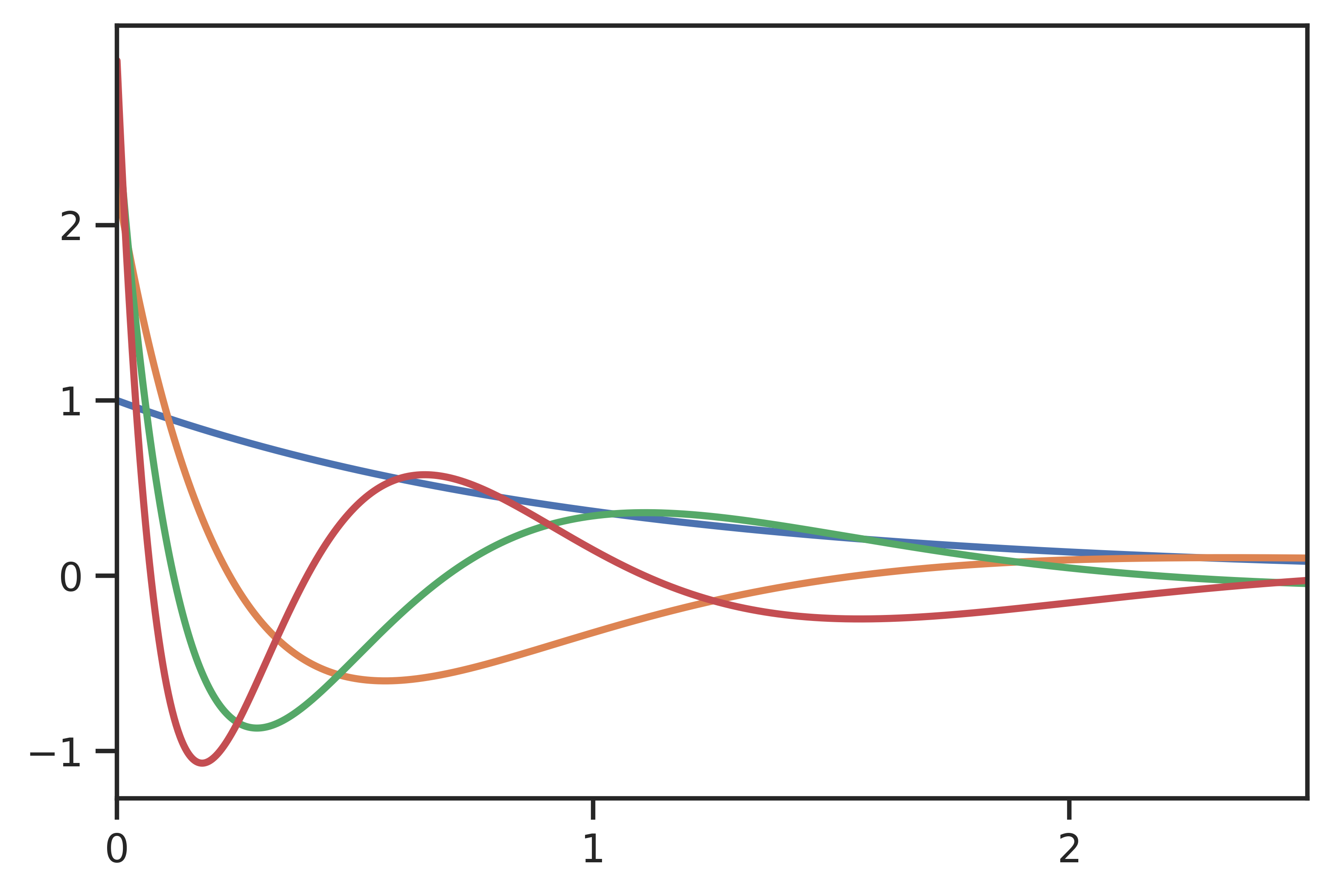}
  \caption{S4-LegS (original S4 kernel)}
  \label{fig:basis-legs}
\end{subfigure}
\begin{subfigure}{.48\linewidth}%
  \centering
  \includegraphics[width=\linewidth]{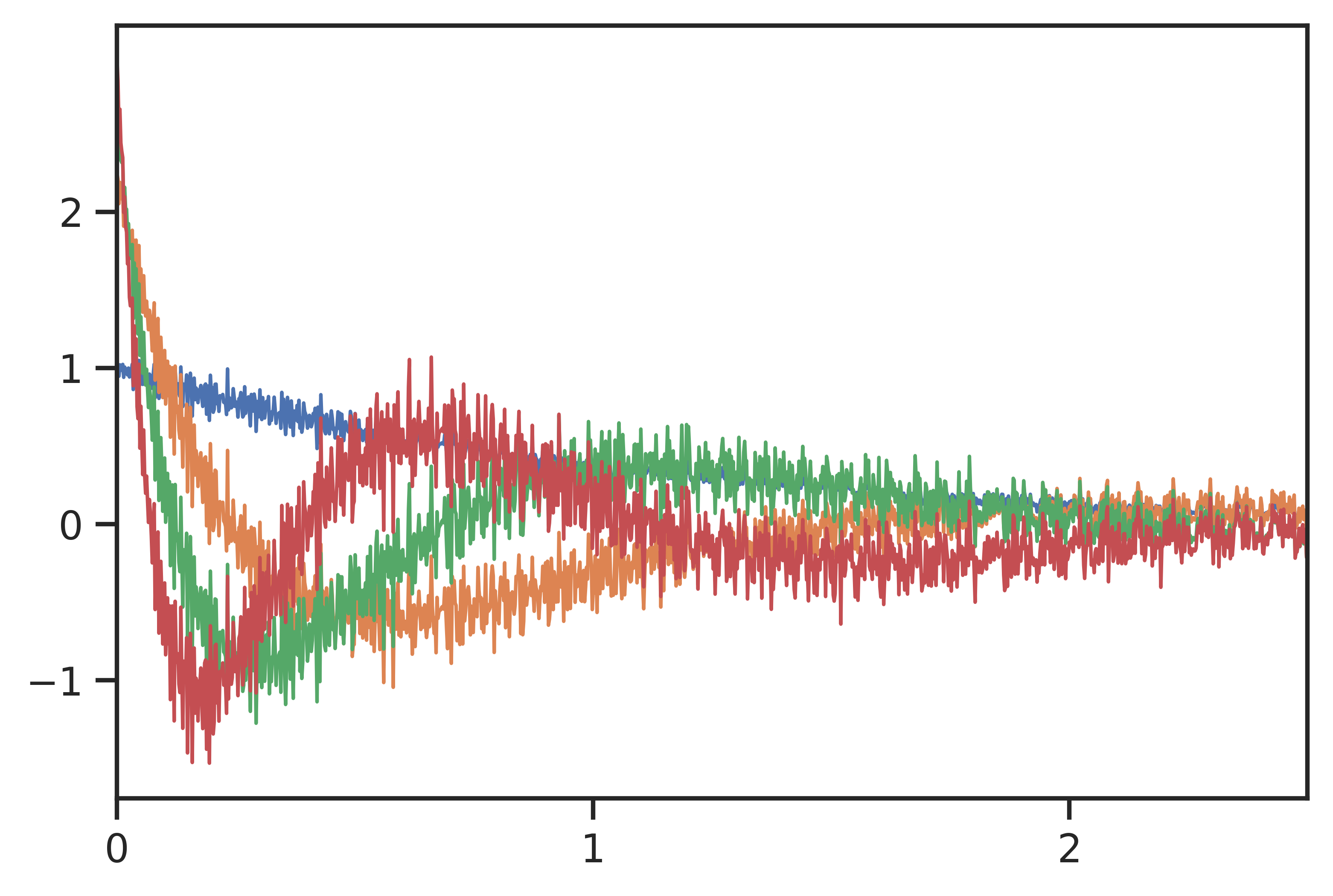}
  \caption{S4D-LegS ($N=256$)}
  \label{fig:basis-legsd-256}
\end{subfigure}
\\
\begin{subfigure}{.48\linewidth}%
  \centering
  \includegraphics[width=\linewidth]{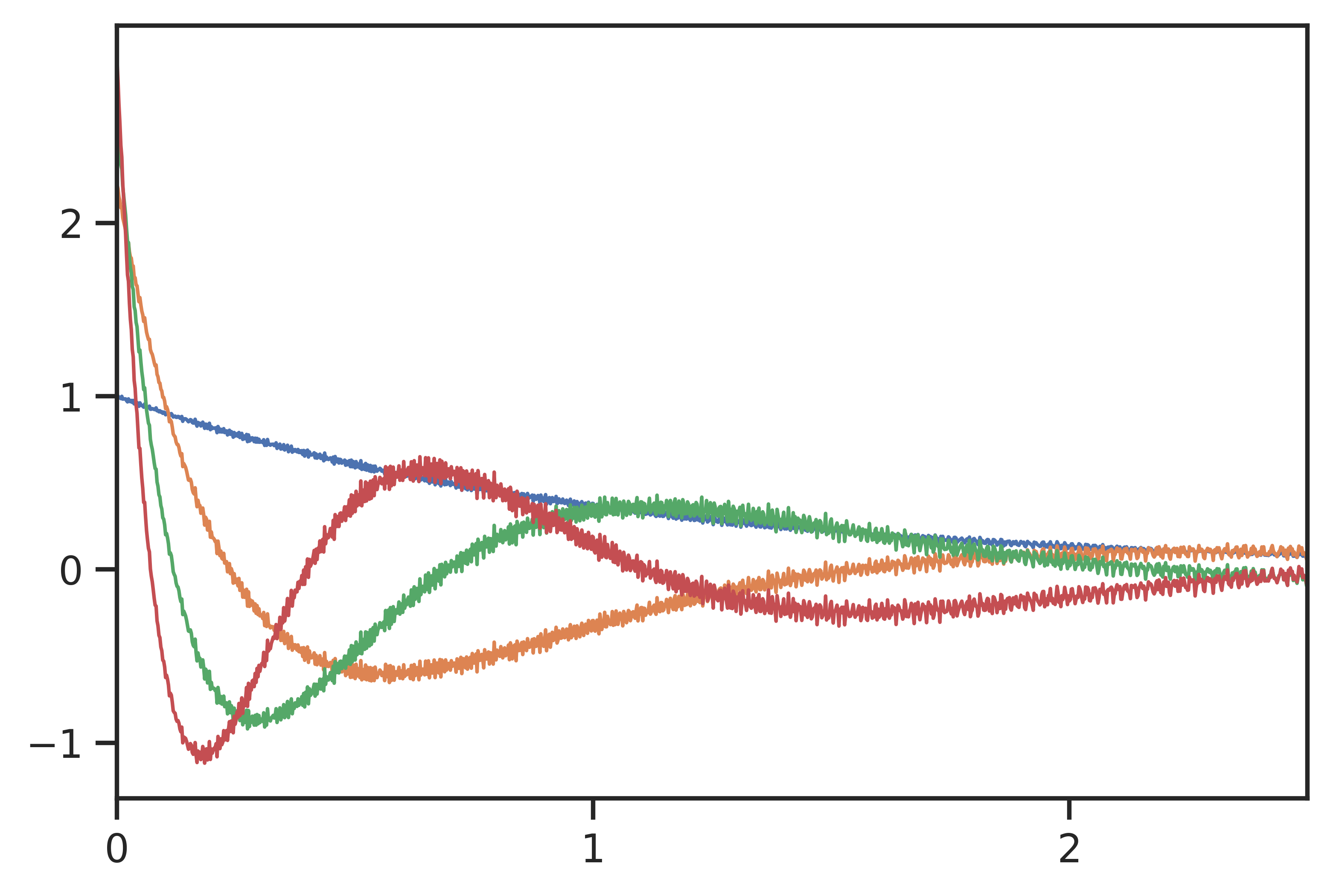}
  \caption{S4D-LegS ($N=1024$, ZOH)}
  \label{fig:basis-legsd-1024-zoh}
\end{subfigure}
\begin{subfigure}{.48\linewidth}%
  \centering
  \includegraphics[width=\linewidth]{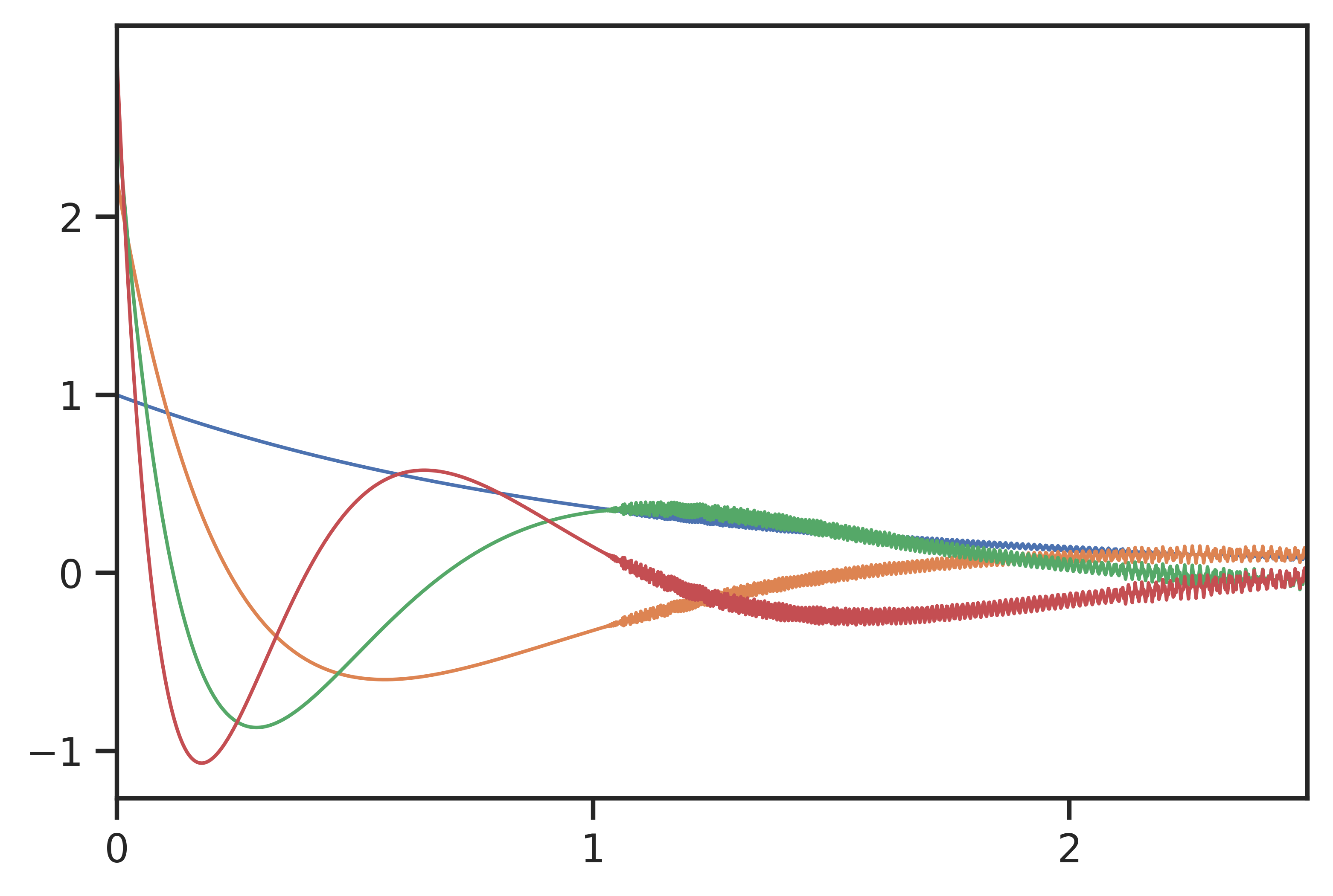}
  \caption{S4D-LegS ($N=1024$, Bilinear)}
  \label{fig:basis-legsd-1024-bilinear}
\end{subfigure}
\caption{
  \textbf{(Visualization of \cref{thm:legsd}).}
  (a) The particular \( (\bm{A}, \bm{B}) \) matrix chosen in S4 results in smooth basis functions \( e^{t\bm{A}}\bm{B} \) with a closed form formula in terms of Legendre polynomials. By the HiPPO theory, convolving against these functions has a mathematical interpretation as orthogonalizing against an exponentially-decaying measure.
  (b, c) By special properties of this state matrix, removing the low-rank term of its NPLR representation produces the same basis functions as \( N\to \infty \), explaining the empirical effectiveness of DSS.
  (c) Curiously, the bilinear transform instead of ZOH smooths out the kernel to exactly match S4-LegS as \( N \) grows.
}
\label{fig:basis-legsd}
\end{figure}

\begin{SCfigure}[50][t]
  \includegraphics[width=0.40\linewidth]{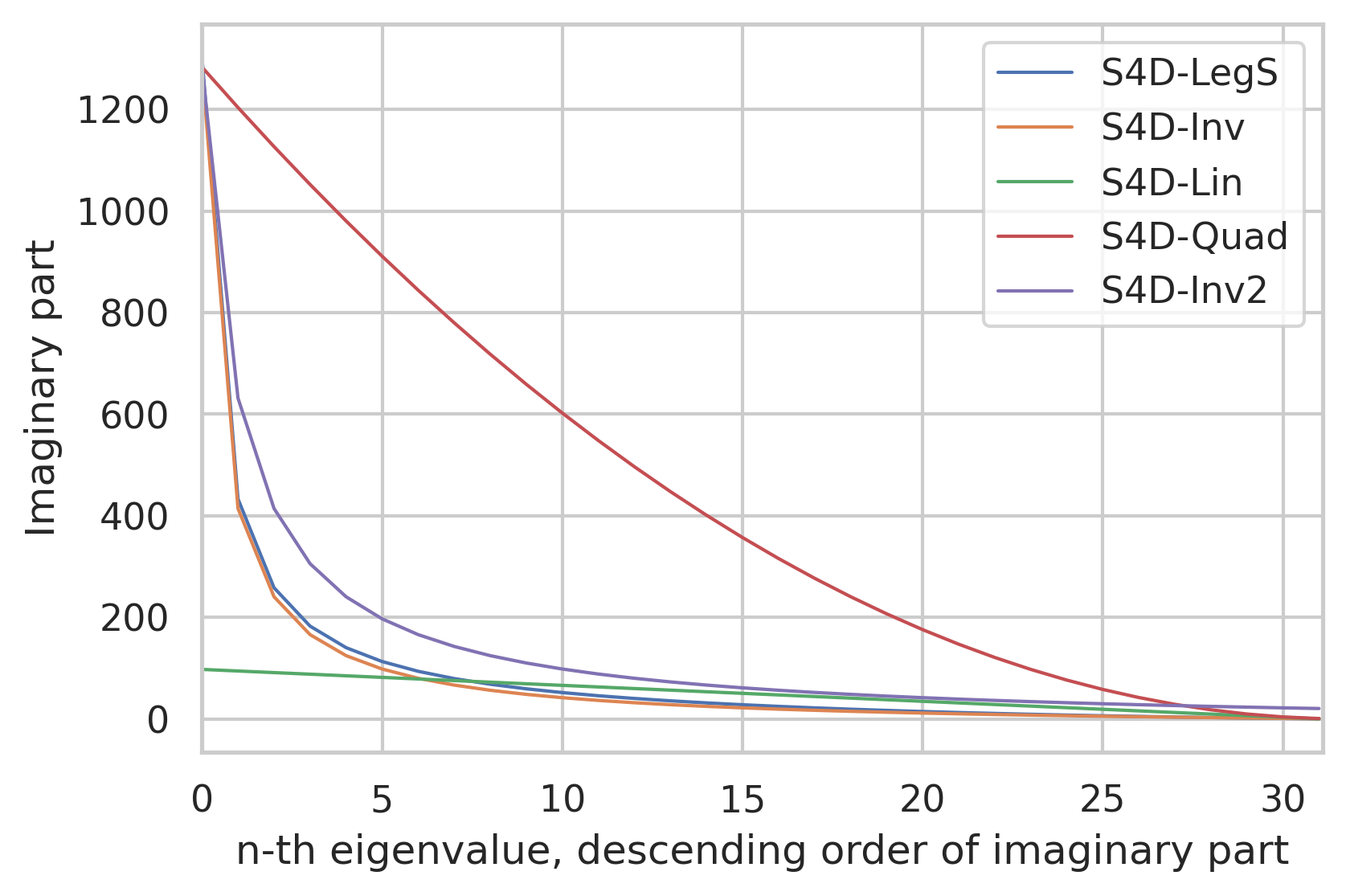}
  \caption{
    (\textbf{S4D eigenvalues.}) All S4D methods have eigenvalues \( -\frac{1}{2}+\lambda_n i \). S4D-LegS theoretically approximates dynamics of the original (non-diagonal) S4 (Blue), and has eigenvalues following an inverse law \( \lambda_n \propto n^{-1} \) (Orange). The precise law is important: other scaling laws with the same range, including an inverse law with different constant (Purple) and a quadratic law (Red), perform empirically worse (\cref{sec:experiments:init}). A very different linear law based on Fourier frequencies also performs well (Green).
  }
  \label{fig:s4d-scaling}
  \vspace*{-1em}
\end{SCfigure}

\section{Experiments}
\label{sec:experiments}

Our experimental study shows that S4D has strong performance in a wide variety of domains and tasks,
including the well-studied Long Range Arena (LRA) benchmark where the best S4D variant is competitive with S4 on all tasks and significantly outperforms all non-SSM baselines.

\looseness=-1
We begin with controlled ablations of the various representations of diagonal state space models.
\begin{itemize}%
  \item In \cref{sec:experiments:diag}, we compare the different methods of parameterizing and computing a diagonal state space model (\cref{sec:method}).
  \item In \cref{sec:experiments:init}, we compare our proposed initializations of the critical \( \bm{A} \) matrix and perform several ablations showing that simple variants can substantially degrade performance, underscoring the importance of choosing \( \bm{A} \) carefully (\cref{sec:initialization}).
  \item In \cref{sec:experiments:full}, we compare our proposed S4D methods against the original S4 method (and the variants proposed in~\citep{gu2022hippo}).
\end{itemize}

\para{Methodology and Datasets.}
In order to study the effects of different S4 and S4D variants in a controlled setting, we propose the following protocol.
We focus on three datasets covering a varied range of
data modalities (image pixels, biosignal time series, audio waveforms), sequence lengths (1K, 4K, 16K), and tasks (classification and regression with bidirectional and causal models).
\begin{itemize}[leftmargin=*,itemsep=0pt]%
  \item \textbf{Sequential CIFAR (sCIFAR).} CIFAR-10 images are flattened into a sequence of length \( 1024 \), and a bidirectional sequence model is used to perform 10-way classification.
  \item \textbf{BIDMC Vital Signs.} EKG and PPG signals of length \( 4000 \) are used to predict respiratory rate (RR), heart rate (HR), and blood oxygen saturation (SpO2). We focus on SpO2 in this study.
  \item \textbf{Speech Commands (SC).}%
    \footnote{We note that a line of prior work including S4 \citep{kidger2020neural,romero2021ckconv,gu2022efficiently} all used a smaller 10-class subset of SC, so our results on the full dataset are not directly comparable.}
    A 1-second raw audio waveform comprising \( 16000 \) samples is used for 35-way spoken word classification. We use an autoregressive (AR) model to vary the setting; this causal setting more closely imitates autoregressive speech generation, where SSMs have shown recent promise \citep{goel2022sashimi}.
\end{itemize}

We fix a simple architecture and training protocol that works generically.
The architecture has \( 4 \) layers and hidden dimension \( H=128 \), resulting in \( \sim 100K \) parameters.
All results are averaged over multiple seeds (full protocol and results including std. reported in \cref{sec:experiment-details}).

\subsection{Parameterization, Computation, Discretization}
\label{sec:experiments:diag}

Given the same diagonal SSM matrices \( \bm{A}, \bm{B} \), there are many variants of how to parameterize the matrices and compute the SSM kernel described in \cref{sec:method}.
We ablate the different choices described in \cref{tab:parameterization}.
Results are in \cref{tab:ablations-param}, and show that:
\begin{enumerate}[label=(\roman*),leftmargin=*]%
  \item Computing the model with a softmax instead of Vandermonde product does not make much difference
  \item Training \( \bm{B} \) is consistently slightly better
  \item Different discretizations (\cref{sec:method:disc}) do not make a noticeable difference
  \item Unrestricting the real part of \( \bm{A} \) (\cref{sec:method:param}) may be slightly better
\end{enumerate}

\begin{table}[!t]
  \begin{minipage}{.5\linewidth}%
    \centering
    \includegraphics[width=\linewidth]{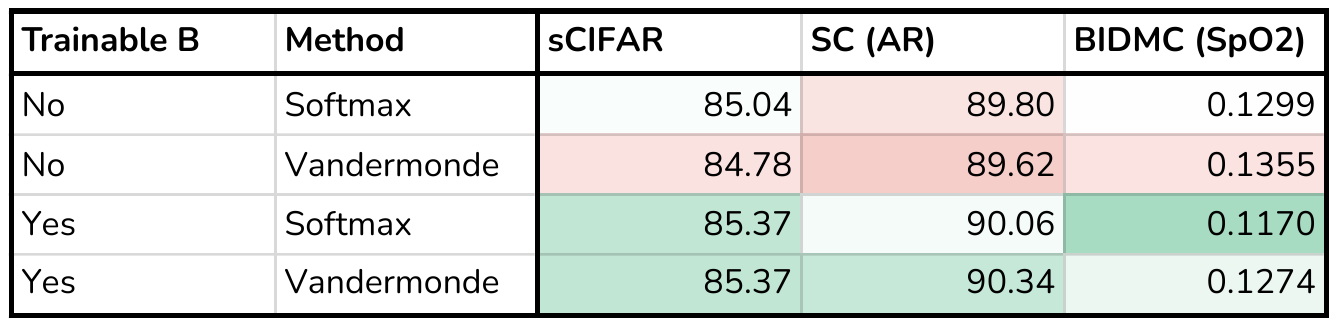}
  \end{minipage}
  \begin{minipage}{.5\linewidth}%
    \centering
    \includegraphics[width=\linewidth]{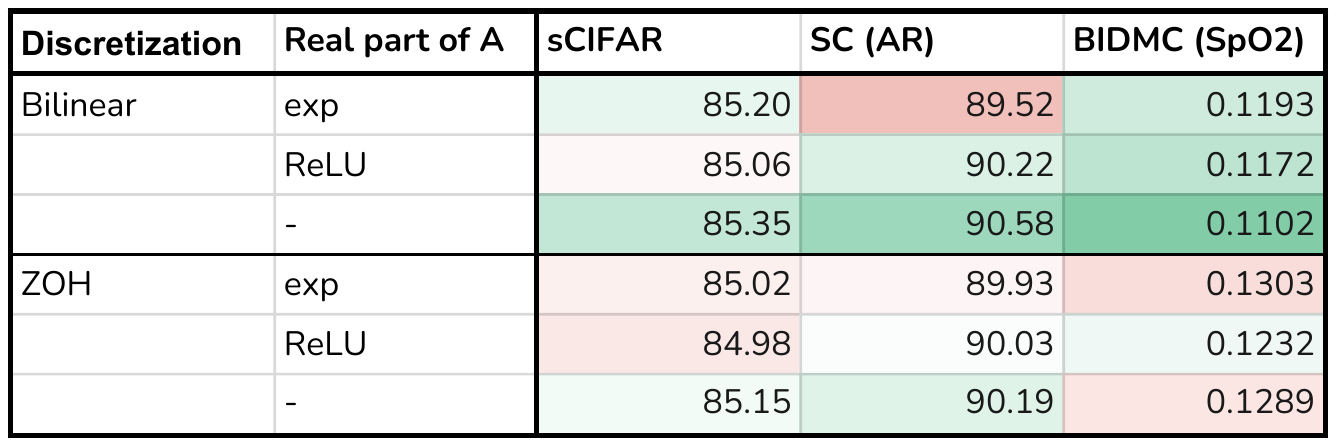}
  \end{minipage}
  \caption{
    Ablations of different parameterizations of diagonal SSMs using S4D-Inv.
    (\emph{Left}) trainability and computation;
    (\emph{Right}) discretization and parameterization.
  }
  \label{tab:ablations-param}
\end{table}

These ablations show that for a fixed initialization \( (\bm{A}, \bm{B}) \), different aspects of parameterizing SSMs make little difference overall.
This justifies the parameterization and algorithm S4D uses (\cref{sec:s4d}), which preserves the choices of the original S4 model and is simpler than DSS.
For the remaining of the experiments in \cref{sec:experiments:init} and \cref{sec:experiments:full},
we fix the S4D parameterization and algorithm described in \cref{sec:method}.
Note that this computes exactly the same kernel as the original S4 algorithm when the low-rank portion is set to \( 0 \),
allowing controlled comparisons of the critical state matrix \( \bm{A} \) for the remainder of this section.

\subsection{S4D Initialization Ablations}
\label{sec:experiments:init}

The original S4 model proposed a specific formula for the \( \bm{A} \) matrix,
and the first diagonal version \citep{gupta2022diagonal} used a specific matrix based on it.
Our new proposed variants S4D-Inv and S4D-Lin also define precise formulas for the initialization of the \( \bm{A} \) matrix \eqref{eq:s4d-inv}.
This raises the question of whether the initialization of the \( \bm{A} \) still needs to be so precise, despite the large simplifications from the original version.
We perform several natural ablations on these initializations, showing that even simple variations of the precise formula can degrade performance.

\para{Imaginary part scaling factor.}
The scaling rules for the imaginary parts of S4D-Inv and S4D-Lin are simple polynomial laws, but how is the constant factor chosen and how important is it?
These constants are based on approximations to HiPPO methods (e.g. \cref{conj:s4d-inv}).
Note that the range of imaginary components for S4D-Inv and S4D-Lin are quite different (\cref{fig:s4d-scaling}); the largest imaginary part is \( \frac{N^2}{\pi} \) for S4D-Inv and \( \pi N \) for S4D-Lin.

We consider scaling all imaginary parts by a constant factor of \( 0.01 \) or \( 100.0 \) to investigate whether the constant matters.
Note that this preserves the overall shape of the basis functions (\cref{fig:1}, dashed lines) and simply changes the frequencies,
and it is not obvious that this should degrade performance.
However, both changes substantially reduce the performance of S4D in all settings.

\para{Randomly initialized imaginary part.}
\looseness=-1
Next, we consider choosing the imaginary parts randomly.
For S4D-Inv, we keep the real parts equal to \( -\frac{1}{2} \) and set each imaginary component to
\begin{align}
  \bm{A}_n &= -\frac{1}{2} + i \frac{N}{\pi} \left( \frac{N}{2u+1}-1 \right)
  \qquad
  u \sim N \cdot \mathcal{U}[0, 1]
\end{align}
Note that when \( u \) is equally spaced in \( [0, 1] \) instead of uniformly random, this exactly recovers S4D-Inv \eqref{eq:s4d-inv}, so this is a sensible random approximation to it.

Similarly, we consider a variant of S4D-Lin
\begin{align}
  \bm{A}_n &= -\frac{1}{2} + i \pi u N
  \qquad
  u \sim N \cdot \mathcal{U}[0, 1]
\end{align}
that is equal to equation \eqref{eq:s4d-lin} when \( u \) is equally spaced instead of random.

\cref{tab:ablations-init} (\emph{Random Imag}) shows that this small change causes minor degradation in performance.
We additionally note that the randomly initialized imaginary ablation can be interpreted as follows.
\cref{fig:s4d-scaling} shows the asymptotics of the imaginary parts of SSM matrices, where the imaginary parts of the eigenvalues correspond to y-values corresponding to uniformly spaced nodes on the x-axis.
This ablation then replaces the uniform spacing on the x-axis with uniformly random x values.

\para{Randomly initialized real part.}

We considering initializing the real part of each eigenvalue as \( -\mathcal{U}[0, 1] \) instead of fixing them to \( -\frac{1}{2} \).
\cref{tab:ablations-init}(Left, \emph{Random Real}) shows that this also causes minor but consistent degradation in performance on the ablation datasets.
Finally, we also consider randomizing both real and imaginary parts, which degrades performance even further.

\begin{table}[!t]
\caption{(\textit{Initialization and Trainability ablations})}
\begin{subtable}{.40\linewidth}%
  \centering
  \includegraphics[width=\linewidth]{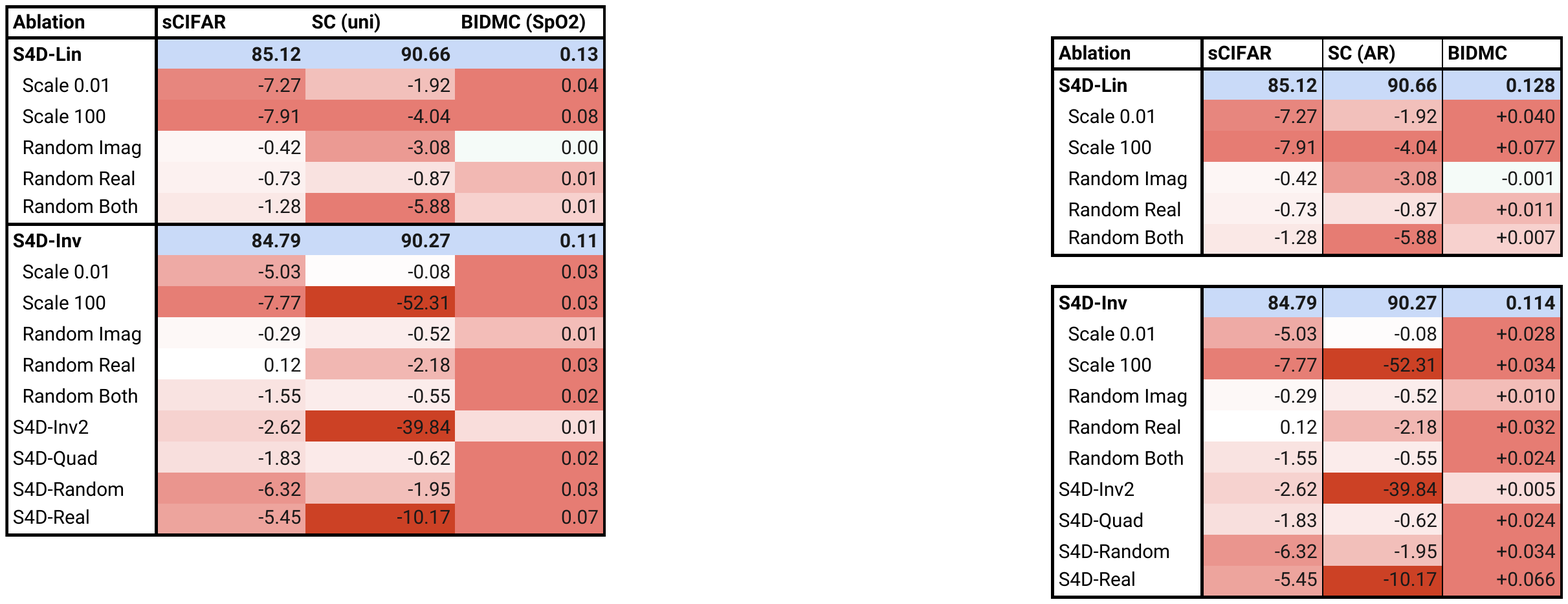}
  \caption{
    Ablations of the initialization of the diagonal \( \bm{A} \) matrix in S4D.
    Very simple changes that largely preserve the structure of the diagonal eigenvalues all degrade performance.
  }
  \label{tab:ablations-init}
\end{subtable}
\begin{subtable}{.60\linewidth}%
  \centering
  \includegraphics[width=\linewidth]{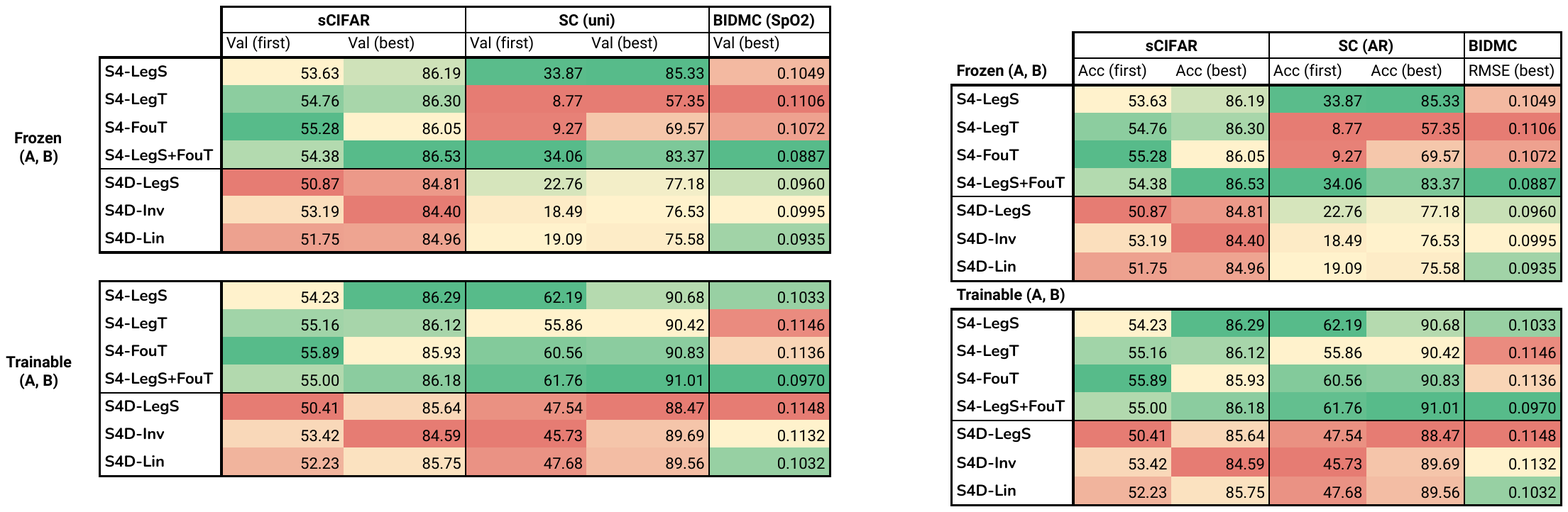}
  \caption{
    Results for all S4 and S4D methods on the ablation datasets, when the \( \bm{A} \) and \( \bm{B} \) matrices are either frozen (\emph{Top}) or trained (\emph{Bottom}).
    Diagonal state matrices are highly competitive with full DPLR versions, achieving strong results on all datasets.
  }
  \label{tab:ablations-trainable}
\end{subtable}
\vspace*{-1.5em}
\end{table}

\para{Ablation: Other S4D matrices.}

Other simple variants of initializations show that it is not just the range of the eigenvalues but the actual distribution that is important (\cref{fig:s4d-scaling}).
Both S4D-Inv2 and S4D-Quad have real part \( -\frac{1}{2} \) and imaginary part satisfying the same maximum value as \cref{conj:s4d-inv}.
The S4D-Inv2 initialization uses the same formula as S4D-Inv, but replaces a \( 2n+1 \) in the denominator with \( n+1 \).
The S4D-Quad initialization uses a polynomial law with power \( 2 \) instead of \( -1 \) (S4D-Inv) or \( 1 \) (S4D-Lin).
\begin{minipage}{.5\linewidth}%
  \noindent
  \begin{equation}
    \label{eq:s4d-inv2}
    (\textbf{S4D-Inv2}) \quad
    \bm{A}_n = -\frac{1}{2} + i \frac{N}{\pi} \left( \frac{N}{n+1}-1 \right)
  \end{equation}
\end{minipage}%
\begin{minipage}{.5\linewidth}%
  \vspace*{8pt}
  \begin{equation}
    \label{eq:s4d-quad}
    (\textbf{S4D-Quad}) \quad
    \bm{A}_n = \frac{1}{\pi} (1 + 2n)^2
  \end{equation}
\end{minipage}

We include two additional methods here that are not based on the proposed S4D-Inv or S4D-Lin methods.
First, S4D-Rand uses a randomly initialized diagonal \( \bm{A} \), and validates that it performs poorly, in line with earlier findings \citep{gu2022efficiently,gupta2022diagonal}.
Second, S4D-Real uses a particular real initialization with \( \bm{A}_n = -(n+1) \).
This is the exact same spectrum as the original S4(-LegS) method,
which validates that it is not just the diagonalization that matters,
highlighting the limitations of \cref{prop:diagonalize}.

\subsection{Full Comparisons of S4D and S4 Methods}
\label{sec:experiments:full}

\para{Trainable \( \bm{A}, \bm{B} \) matrices.}

\cref{tab:ablations-trainable} shows the performance of all S4D and S4 variants \citep{gu2022hippo} on the ablations datasets.
We observe several interesting phenomena:
\begin{enumerate}[label=(\roman*)]
  \item Freezing the matrices performs comparably to training them on sCIFAR and BIDMC, but is substantially worse on SC. We hypothesize that this results from \( \dt \) being poorly initialized for \textsc{SC}, so that at initialization models do not have context over the entire sequence, and training \( \bm{A} \) and \( \bm{B} \) helps adjust for this.
    As further evidence, the \emph{finite window methods} S4-LegT and S4-FouT (defined in \citep{gu2022hippo}) have the most limited context and suffer the most when \( \bm{A} \) is frozen.
  \item The full DPLR versions are often slightly better than the diagonal version throughout the entire training curve. We report the validation accuracy after 1 epoch of training on sCIFAR and SC to illustrate this phenomenon. Note that this is not a consequence of having more parameters (\cref{sec:experiment-details}).
\end{enumerate}

\para{Large models on ablation datasets.}

Finally, we relax the strict requirements on model size and regularization for the ablation datasets,
and show the performance of S4 and S4D variants on the test sets with a larger model (architecture and training details in \cref{sec:experiment-details}) when the model size and regularization is simply increased (\cref{tab:full}).
We note that results for each dataset are better than the original S4 model, which was already state-of-the-art on these datasets~\citep{gu2021lssl,gu2022efficiently}.

\begin{table}[t!]
  \small
  \caption{
    (\textbf{Ablation datasets:} Full results with larger models.) For Speech Commands, we show both an autoregressive model as in the ablations, and an unconstrained bidirectional model.
  }
    \centering
    \begin{tabular}{@{}llllllll@{}}
        \toprule
        \textsc{Model}         & \textsc{sCIFAR}          & \multicolumn{2}{c}{\textsc{SC}} & \multicolumn{3}{c}{\textsc{BIDMC}} \\
        \cmidrule(lr){2-2} \cmidrule(lr){3-4} \cmidrule(lr){5-7}
                               & \textsc{Test}            & \textsc{AR}                     & \textsc{Bi.}                        & \textsc{HR}               & \textsc{RR}               & \textsc{SpO2}          \\
                               \midrule
                       S4-LegS & \textbf{91.80} (0.43)    & \textbf{93.60} (0.13)           & 96.08 (0.15)                        & \textbf{0.332} (0.013)    & \underline{0.247} (0.062) & 0.090 (0.006)          \\
    S4-FouT                    & \underline{91.22} (0.25) & 91.78 (0.10)                    & 95.27 (0.20)                        & \underline{0.339} (0.020) & 0.301 (0.030)             & \textbf{0.068} (0.003) \\
    \midrule
    S4D-LegS                   & 89.92 (1.69)             & \underline{93.57} (0.09)        & 95.83 (0.14)                        & 0.367 (0.001)             & 0.248 (0.036)             & 0.102 (0.001)          \\
    S4D-Inv                    & 90.69 (0.06)             & 93.40 (0.67)                    & \underline{96.18} (0.27)            & 0.373 (0.024)             & 0.254 (0.022)             & 0.110 (0.001)          \\
    S4D-Lin                    & 90.42 (0.03)             & 93.37 (0.05)                    & \textbf{96.25} (0.03)               & 0.379 (0.006)             & \textbf{0.226} (0.008)    & 0.114 (0.003)          \\
        \bottomrule
    \end{tabular}
    \label{tab:full}
\end{table}

\para{Long Range Arena.}

We use the same hyperparameter setting for the state-of-the-art S4 model in~\citep{gu2022hippo} on the Long Range Arena benchmark for testing long dependencies in sequence models.
S4D variants are highly competitive on all datasets except Path-X, and outperform the S4 variants on several of them.
On Path-X using this hyperparameter setting with bidirectional models,
only S4D-Inv, our simpler approximation to the original S4-LegS model, achieves above random chance,
and has an average of 85\% on the full LRA suite, more than 30 points better than the original Transformer \citep{tay2021long}.

\begin{table}[!t]
  \small
  \caption{
    (\textbf{Long Range Arena}) Accuracy on full suite of LRA tasks. Hyperparameters in \cref{sec:experiment-details}.
  }
    \centering
    \begin{tabular}{@{}llllllll@{}}
        \toprule
        \textsc{Model} & \textsc{ListOps}         & \textsc{Text}            & \textsc{Retrieval}       & \textsc{Image}           & \textsc{Pathfinder}      & \textsc{Path-X}   & \textsc{Avg}      \\ %
        \midrule
        S4-LegS        & 59.60 (0.07)             & 86.82 (0.13)             & 90.90 (0.15)             & \underline{88.65} (0.23) & \underline{94.20} (0.25) & \textbf{96.35}    & \textbf{86.09}    \\
        S4-FouT        & 57.88 (1.90)             & 86.34 (0.31)             & 89.66 (0.88)             & \textbf{89.07} (0.19)    & \textbf{94.46} (0.24)    & \xmark            & 77.90             \\
        \midrule
        S4D-LegS       & \underline{60.47} (0.34) & 86.18 (0.43)             & 89.46 (0.14)             & 88.19 (0.26)             & 93.06 (1.24)             & 91.95             & 84.89             \\
        S4D-Inv        & 60.18 (0.35)             & \textbf{87.34} (0.20)    & \textbf{91.09} (0.01)    & 87.83 (0.37)             & 93.78 (0.25)             & \underline{92.80} & \underline{85.50} \\
        S4D-Lin        & \textbf{60.52} (0.51)    & \underline{86.97} (0.23) & \underline{90.96} (0.09) & 87.93 (0.34)             & 93.96 (0.60)             & \xmark            & 78.39             \\
        \midrule
        S4 (original)  & 58.35                    & 76.02                    & 87.09                    & 87.26                    & 86.05                    & 88.10             & 80.48             \\ %
        Transformer    & 36.37                    & 64.27                    & 57.46                    & 42.44                    & 71.40                    & \xmark            & 53.66             \\ %
        \bottomrule
    \end{tabular}
    \label{tab:lra}
\end{table}

\para{Final parameterization ablations on Path-X.}
Finally, we return to the parameterization choices presented in \cref{sec:method} and ablated in \cref{sec:experiments:diag},
and ablate them once more on the difficult Path-X dataset.
We use small models of between 150K and 200K parameters (differing only depending on whether \( \bm{B} \) is trained).
We fix the S4D-LegS initialization (i.e., the diagonal HiPPO initialization \eqref{eq:hippo-legsd}).

We start from the base S4D parameterization based on S4: bilinear discretization, \( \exp \) \( \Re(\bm{A}) \), trainable \( \bm{B} \), and no softmax (\cref{tab:parameterization}).
We ablate each of these choices one at a time for the discretization, constraint on \( \Re(\bm{A}) \), trainability of \( \bm{B} \), and normalization.
We also consider the combination that defines DSS: ZOH discretization, identity \( \Re(\bm{A}) \), frozen \( \bm{B} \), softmax normalization.

\begin{table}[!t]
  \caption{(\textbf{S4D Path-X Ablations}.) Ablating parameterization choices for models with less than 200K parameters.}
    \centering
    \begin{tabular}{@{}llllllll@{}}
        \toprule
        \textbf{S4D}    & Identity \( \Re(\bm{\Lambda}) \) & \( \text{ReLU} \) \( \Re(\bm{\Lambda}) \) & ZOH disc.    & Frozen \( \bm{B} \) & ZOH + softmax & \textbf{DSS} \\
        \midrule
          92.12 (0.34) & 92.32 (0.16)                     & 92.29 (0.20)                              & 92.09 (0.08) & 91.66 (0.62)        & 90.92 (0.34)  & 89.72 (0.33) \\
        \bottomrule
    \end{tabular}
    \label{tab:pathx-ablation}
\end{table}

\cref{tab:pathx-ablation} shows that the default S4 parameterization choices are a strong baseline.
As in \cref{sec:experiments:diag}, we find that most of the other choices do not make much difference:
\begin{enumerate}[label=(\roman*),itemsep=0pt]%
  \item letting \( \Re(\bm{A}) \) be unconstrained has little benefit, and can theoretically cause instabilities, so we do not recommend it,
  \item the bilinear vs. ZOH discretizations make no difference,
  \item training \( \bm{B} \) helps slightly, for a minor increase in parameter count and no change in speed.
\end{enumerate}
Finally, on this task -- unlike the easier ablation datasets in \cref{sec:experiments:diag} --
the softmax normalization of DSS actually hurts performance, and we do not recommend it in general.

\section{Conclusion}

State space models based on S4 are a promising family of models for modeling many types of sequential data, with particular strengths for continuous signals and long-range interactions.
These models are a large departure from conventional sequence models such as RNNs, CNNs, and Transformers, with many new ideas and moving parts.
This work provides a more in-depth exposition for all aspects of working with S4-style models, from their core structures and kernel computation algorithms,
to miscellaneous choices in their parameterizations, to new theory and methods for their initialization.
We systematically analyzed and ablated each of these components, and provide recommendations for building a state space model that is as simple as possible, while as theoretically principled and empirically effective as S4.
We believe that S4D can be a strong generic sequence model for a variety of domains, that opens new directions for state space models theoretically, and is much more practical to understand and implement for practitioners.

\subsubsection*{Acknowledgments}
We gratefully acknowledge the support of DARPA under Nos. FA86501827865 (SDH) and FA86501827882 (ASED); NIH under No. U54EB020405 (Mobilize), NSF under Nos. CCF1763315 (Beyond Sparsity), CCF1563078 (Volume to Velocity), and 1937301 (RTML); ONR under No. N000141712266 (Unifying Weak Supervision); the Moore Foundation, NXP, Xilinx, LETI-CEA, Intel, IBM, Microsoft, NEC, Toshiba, TSMC, ARM, Hitachi, BASF, Accenture, Ericsson, Qualcomm, Analog Devices, the Okawa Foundation, American Family Insurance, Google Cloud, Swiss Re, Brown Institute for Media Innovation, Department of Defense (DoD) through the National Defense Science and Engineering Graduate Fellowship (NDSEG) Program,  Fannie and John Hertz Foundation, National Science Foundation Graduate Research Fellowship Program, Texas Instruments, and members of the Stanford DAWN project: Teradata, Facebook, Google, Ant Financial, NEC, VMWare, and Infosys. The U.S. Government is authorized to reproduce and distribute reprints for Governmental purposes notwithstanding any copyright notation thereon. Any opinions, findings, and conclusions or recommendations expressed in this material are those of the authors and do not necessarily reflect the views, policies, or endorsements, either expressed or implied, of DARPA, NIH, ONR, or the U.S. Government.

\newpage
\bibliography{biblio}

\newpage

\appendix

 \section{Method Details}
\label{sec:theory-details}

\subsection{Proofs}
\label{sec:theory-details:proofs}

We prove \cref{thm:legsd}, and then show why this it is a surprising result that is not true in general to low-rank perturbations of SSMs.

We start with the interpretation of the S4-LegS matrix shown in~\citep{gu2022hippo},
which corresponds to \cref{fig:1} (Left).

\begin{theorem}%
  \label{thm:legs}
  Let \( \bm{A}, \bm{B}, \bm{P} \) be the matrices defined in equation \eqref{eq:hippo-legs}.
  The SSM kernels \( K_n(t) = \bm{e}_n^\top e^{t\bm{A}}\bm{B} \) have the closed form formula
  \begin{align*}
    K_n(t) = L_n(e^{-t}) e^{-t}
  \end{align*}
  where \( L_n \) are the Legendre polynomials shifted and scaled to be orthonormal on the interval \( [0, 1] \).
\end{theorem}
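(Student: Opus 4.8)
The plan is to verify the claimed formula directly, by showing that the vector-valued function \( g(t) := \bigl(L_n(e^{-t})e^{-t}\bigr)_n \) solves the same linear initial value problem as the basis \( K(t) = e^{t\bm{A}}\bm{B} \). Since \( K \) is the unique solution of \( K'(t) = \bm{A}K(t) \) with \( K(0) = \bm{B} \) (standard uniqueness for constant-coefficient linear systems), it suffices to check that \( g \) satisfies this ODE and initial condition; then \( g \equiv K \) and the theorem follows. The initial condition is immediate: writing the orthonormalized polynomials as \( L_n(z) = (2n+1)^{1/2}\tilde{P}_n(z) \), where \( \tilde{P}_n(z) = P_n(2z-1) \) is the shifted Legendre polynomial and \( P_n \) the standard one on \( [-1,1] \), we have \( P_n(1)=1 \), so \( g_n(0) = L_n(1) = (2n+1)^{1/2} = \bm{B}_n \).

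For the dynamics, I would set \( z = e^{-t} \) and differentiate using \( \tfrac{\dd z}{\dd t} = -z \), obtaining \( g_n'(t) = -z\,\tfrac{\dd}{\dd z}\bigl[z L_n(z)\bigr] \). Comparing with \( (\bm{A}g)_n = z\sum_k \bm{A}_{nk}L_k(z) \) and cancelling the common factor \( z \), the ODE reduces to the single polynomial identity
\begin{equation}
  -\frac{\dd}{\dd z}\bigl[z L_n(z)\bigr] = \sum_{k} \bm{A}_{nk}\, L_k(z). \tag{$\ast$}
\end{equation}
Both sides are polynomials of degree at most \( n \), so it is enough to match their expansions in the orthonormal basis \( \{L_k\} \); equivalently, \( (\ast) \) holds iff the coefficients \( c_{nk} := \int_0^1 \tfrac{\dd}{\dd z}[z L_n(z)]\,L_k(z)\dd z \) equal \( -\bm{A}_{nk} \) for every \( k \).

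The core computation is to evaluate \( c_{nk} \) by integration by parts. Taking \( u = z L_n \) and \( v = L_k \) gives \( c_{nk} = [z L_n L_k]_0^1 - \int_0^1 z L_n L_k'\dd z = (2n+1)^{1/2}(2k+1)^{1/2} - \int_0^1 z L_n L_k'\dd z \), where the lower boundary term vanishes because of the factor \( z \) and the upper one uses \( L_n(1)=(2n+1)^{1/2} \). Recognizing \( \int_0^1 z L_k' L_n \dd z = c_{kn} - \delta_{nk} \) (expand \( \tfrac{\dd}{\dd z}[zL_k] \) and apply orthonormality) yields the symmetry relation \( c_{nk} + c_{kn} = (2n+1)^{1/2}(2k+1)^{1/2} + \delta_{nk} \). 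Triangularity then closes the argument: since \( \tfrac{\dd}{\dd z}[z L_n] \) has degree \( n \), it is orthogonal to \( L_k \) for \( k>n \), so \( c_{nk}=0 \) there; setting \( k=n \) gives \( c_{nn}=n+1 \); and for \( k<n \) the relation combined with \( c_{kn}=0 \) gives \( c_{nk}=(2n+1)^{1/2}(2k+1)^{1/2} \). These are precisely \( -\bm{A}_{nk} \) from \eqref{eq:hippo-legs}, establishing \( (\ast) \) and hence the theorem.

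I expect the main obstacle to be careful bookkeeping rather than any conceptual difficulty: correctly handling the boundary terms in the integration by parts (and confirming \( L_n(1)=(2n+1)^{1/2} \) under the stated orthonormalization on \( [0,1] \)), together with the observation that \( \tfrac{\dd}{\dd z}[z L_n] \) is triangular in the \( \{L_k\} \) basis, which is exactly what makes the symmetric relation solvable. This argument is essentially a reverse-engineering of the HiPPO-LegS derivation, so the only real risk is a convention mismatch in the shift and scaling of the Legendre polynomials.
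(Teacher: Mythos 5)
Your argument is correct, but note that the paper does not actually prove \cref{thm:legs}: it states it as a known result imported from prior work on HiPPO (the appendix introduces it as ``the interpretation of the S4-LegS matrix shown in \citep{gu2022hippo}'') and only proves \cref{lmm:legs-orthonormal} and \cref{thm:legsd} on top of it. So your proposal supplies a self-contained verification where the paper offers only a citation. The verification itself checks out: uniqueness for the linear IVP reduces everything to the polynomial identity \( -\tfrac{\dd}{\dd z}[zL_n(z)] = \sum_k \bm{A}_{nk} L_k(z) \); the initial condition uses \( L_n(1) = (2n+1)^{1/2} \), which is right for the orthonormalization on \( [0,1] \) since \( L_n(z) = (2n+1)^{1/2} P_n(2z-1) \) and \( P_n(1)=1 \); the symmetry relation \( c_{nk} + c_{kn} = (2n+1)^{1/2}(2k+1)^{1/2} + \delta_{nk} \) follows from your integration by parts together with \( zL_k' = \tfrac{\dd}{\dd z}[zL_k] - L_k \); and the triangularity \( c_{nk}=0 \) for \( k>n \) (because \( \tfrac{\dd}{\dd z}[zL_n] \) has degree \( n \)) pins down \( c_{nn}=n+1 \) and \( c_{nk}=(2n+1)^{1/2}(2k+1)^{1/2} \) for \( k<n \), exactly matching \( -\bm{A}_{nk} \) in \eqref{eq:hippo-legs}. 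The lower triangularity of \( \bm{A} \) also ensures the finite-\( N \) truncation is exact, so there is no hidden issue with truncating the system. This is essentially the HiPPO-LegS derivation run in reverse, which is the natural way to prove the statement; the only caveats are the convention checks you already flag.
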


\begin{lemma}%
  \label{lmm:legs-orthonormal}
  The functions \( L_n(e^{-t}) \) are a complete orthonormal basis with respect to the measure \( \omega(t) = e^{-t} \).
\end{lemma}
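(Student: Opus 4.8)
The plan is to reduce the claim to the classical completeness and orthonormality of the shifted, normalized Legendre polynomials on the unit interval via a single change of variables. The natural substitution is $x = e^{-t}$, which maps $t \in [0,\infty)$ bijectively onto $x \in (0,1]$ and satisfies $e^{-t}\,dt = -dx$. The point is that this substitution is precisely designed so that the weighted measure $\omega(t)\,dt = e^{-t}\,dt$ on $[0,\infty)$ is pushed forward to Lebesgue measure $dx$ on $[0,1]$, exactly the measure under which the $L_n$ are orthonormal by hypothesis.

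First I would define the map $U \colon L^2([0,1], dx) \to L^2([0,\infty), e^{-t}\,dt)$ by $(Uf)(t) = f(e^{-t})$ and check that it is a unitary (isometric and surjective) equivalence of Hilbert spaces. Isometry is immediate from the change of variables: for $f, g \in L^2([0,1])$,
\[
\int_0^\infty f(e^{-t})\,g(e^{-t})\,e^{-t}\,dt = \int_0^1 f(x)\,g(x)\,dx,
\]
so $U$ preserves inner products; surjectivity follows because $U$ has the explicit inverse $(U^{-1}h)(x) = h(-\log x)$, using that $t \mapsto e^{-t}$ is a bijection of $[0,\infty)$ onto $(0,1]$.

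Next, specializing the displayed identity to $f = L_m$ and $g = L_n$ yields orthonormality of the functions $L_n(e^{-t})$ with respect to $\omega$, since by hypothesis (\cref{thm:legs}) the $L_n$ are orthonormal on $[0,1]$ under Lebesgue measure, i.e. $\int_0^1 L_m L_n\,dx = \delta_{mn}$. For completeness, I would invoke the classical fact that the shifted Legendre polynomials form a complete orthonormal basis of $L^2([0,1], dx)$; this follows from the Weierstrass approximation theorem, since polynomials are dense in $C([0,1])$ and hence in $L^2([0,1])$, and every polynomial is a finite linear combination of the $L_n$. Because a unitary map sends complete orthonormal bases to complete orthonormal bases, the image $\{U L_n\}_n = \{L_n(e^{-t})\}_n$ is a complete orthonormal basis of $L^2([0,\infty), e^{-t}\,dt)$, which is the claim.

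The main obstacle here is essentially bookkeeping rather than conceptual: there is no genuinely difficult step, and the only nontrivial input is the completeness of the Legendre polynomials on the unit interval, which is standard and which I would cite rather than reprove. The one point deserving care is verifying that $U$ is truly unitary — in particular surjective, so that no functions in the target space are ``missed'' — since it is exactly this unitarity that simultaneously transports both orthonormality and completeness from the well-understood interval $[0,1]$ to the half-line.
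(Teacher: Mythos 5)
Your proof is correct and takes essentially the same route as the paper: the substitution $x = e^{-t}$ pushes the measure $e^{-t}\,dt$ on $[0,\infty)$ forward to Lebesgue measure on $[0,1]$, reducing both orthonormality and completeness to the classical facts about shifted Legendre polynomials. Your unitary-operator packaging is a slightly more careful version of the paper's one-line "completeness follows from the fact that polynomials are complete," since it makes explicit why completeness transfers across the change of variables.
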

\begin{proof}%
  The polynomials are defined to be orthonormal on \( [0,1] \), i.e.
  \begin{align*}
    \int_0^1 L_n(t) L_m(t) \dd t = \delta_{n,m}.
  \end{align*}
  By the change of variables \( t = e^{-s} \) with \( \frac{dt}{ds} = -e^{-s} \),
  \begin{align*}
    -\int_{-\infty}^0 L_n(e^{-s}) L_m(e^{-s}) e^{-s} \dd s = \delta_{n,m}
    = \int_0^{\infty} L_n(e^{-s}) L_m(e^{-s}) e^{-s} \dd s
  \end{align*}
  which shows the orthonormality.

  Completeness follows from the fact that polynomials are complete.
\end{proof}

\begin{proof}[Proof of \cref{thm:legsd}]%
  We start with the standard interpretation of SSMs as convolutional systems.
  The SSM \( x'(t) = \bm{A} x(t) + \bm{B} u(t) \) is equivalent to the convolution
  \begin{align*}
    x_n(t) = (u \ast K_n)(t) = \int_{-\infty}^t u(s) K_n(t-s) \dd s
    = \int_{0}^\infty u(t-s) K_n(s) \dd s
  \end{align*}
  for the SSM kernels  (equation \eqref{eq:ssm-basis}).

  Defining \( u^{(t)}(s) = u(t-s) \), we can write this as
  \begin{align*}
    x_n(t) = \langle u^{(t)}, K_n \rangle_{\omega}
  \end{align*}
  where \( \omega(s) = e^{-s} \) and
  \( \langle p(s), q(s) \rangle_{\omega} = \int_0^{\infty} p(s) q(s) \omega(s) \dd s \)
  is the inner product in the Hilbert space of L2 functions with respect to measure \( \omega \).

  By \cref{thm:legs}, the \( K_n \) are a complete orthonormal basis in this Hilbert space.
  There \( x_n(t) \) represents a decomposition of the function \( u^{(t)} \) with respect to this basis, and can be recovered as a linear combination of these projections
  \begin{align*}
    u^{(t)} = \sum_{n=0}^\infty x_n(t) K_n.
  \end{align*}
  Pointwise over the inner times \( s \),
  \begin{align*}
    u^{(t)}(s) = \sum_{n=0}^\infty x_n(t) K_n(s).
  \end{align*}
  This implies that
  \begin{align*}
    u(t) &= u^{(t)}(0) = \sum_{n=0}^\infty x_n(t) K_n(0)
    \\ &= \sum_{n=0}^\infty x_n(t) L_n(0)
    = \sum_{n=0}^\infty x_n(t) (2n+1)^{\frac{1}{2}}
    \\&= \bm{B}^\top x(t)
  \end{align*}
  Intuitively, due to the function reconstruction interpretation of HiPPO~\citep{gu2022hippo},
  we can approximate \( u(t) \) using knowledge in the current state \( x(t) \).
  There in the limit \( N \to \infty \),
  the original SSM is equivalent to
  \begin{align*}
    x'(t) &= \bm{A} x(t) + \bm{B} u(t)
    \\&= \bm{A} x(t) + \frac{1}{2}\bm{B} u(t) + \frac{1}{2}\bm{B} u(t)
    \\&= \bm{A} x(t) + \frac{1}{2}\bm{B} \bm{B}^\top x(t) + \frac{1}{2}\bm{B} u(t)
    \\&= \bm{A} x(t) + \bm{P}\bm{P}^\top x(t) + \frac{1}{2}\bm{B} u(t)
    \\&= \bm{A}^N x(t) + \frac{1}{2}\bm{B} u(t)
  \end{align*}
\end{proof}

\paragraph{General low-rank perturbations.}

Finally, we remark that this phenomenon where removing the low-rank correction to a DPLR matrix approximates the original dynamics, is unique to this HiPPO-LegS matrix.
We note that if instead of \( \bm{P}\bm{P}^\top \), a \emph{random} rank-1 correction is added to the HiPPO-LegS matrix in \cref{thm:legsd},
the resulting SSM kernels look completely different and in fact diverge rapidly as the magnitude of \( \bm{P} \) increases (\cref{fig:basis-noise}).

\begin{figure}[!ht]
\begin{subfigure}{.32\linewidth}%
  \centering
  \includegraphics[width=\linewidth]{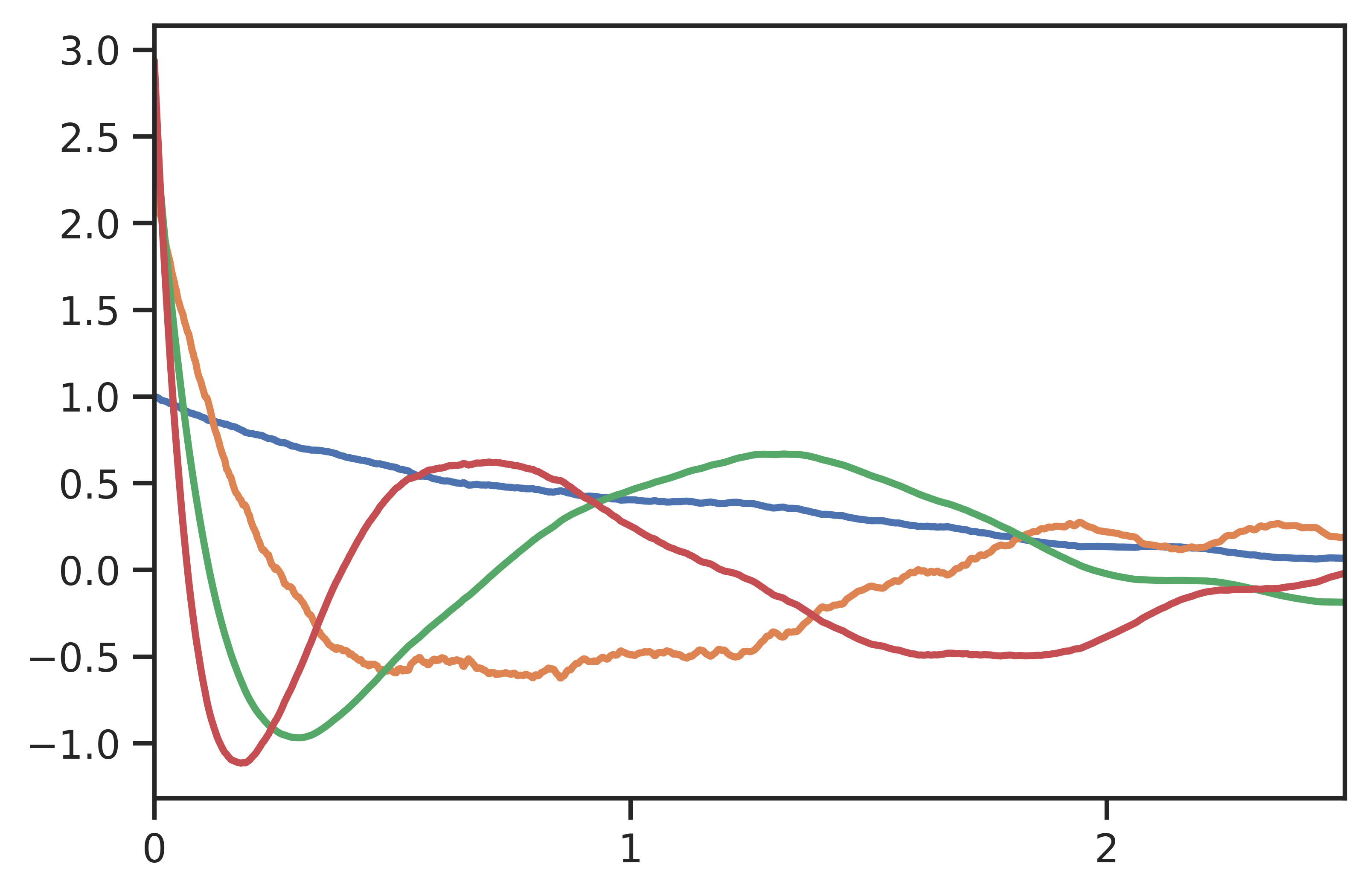}
  \caption{\( \sigma = 0.3 \)}
\end{subfigure}
\begin{subfigure}{.32\linewidth}%
  \centering
  \includegraphics[width=\linewidth]{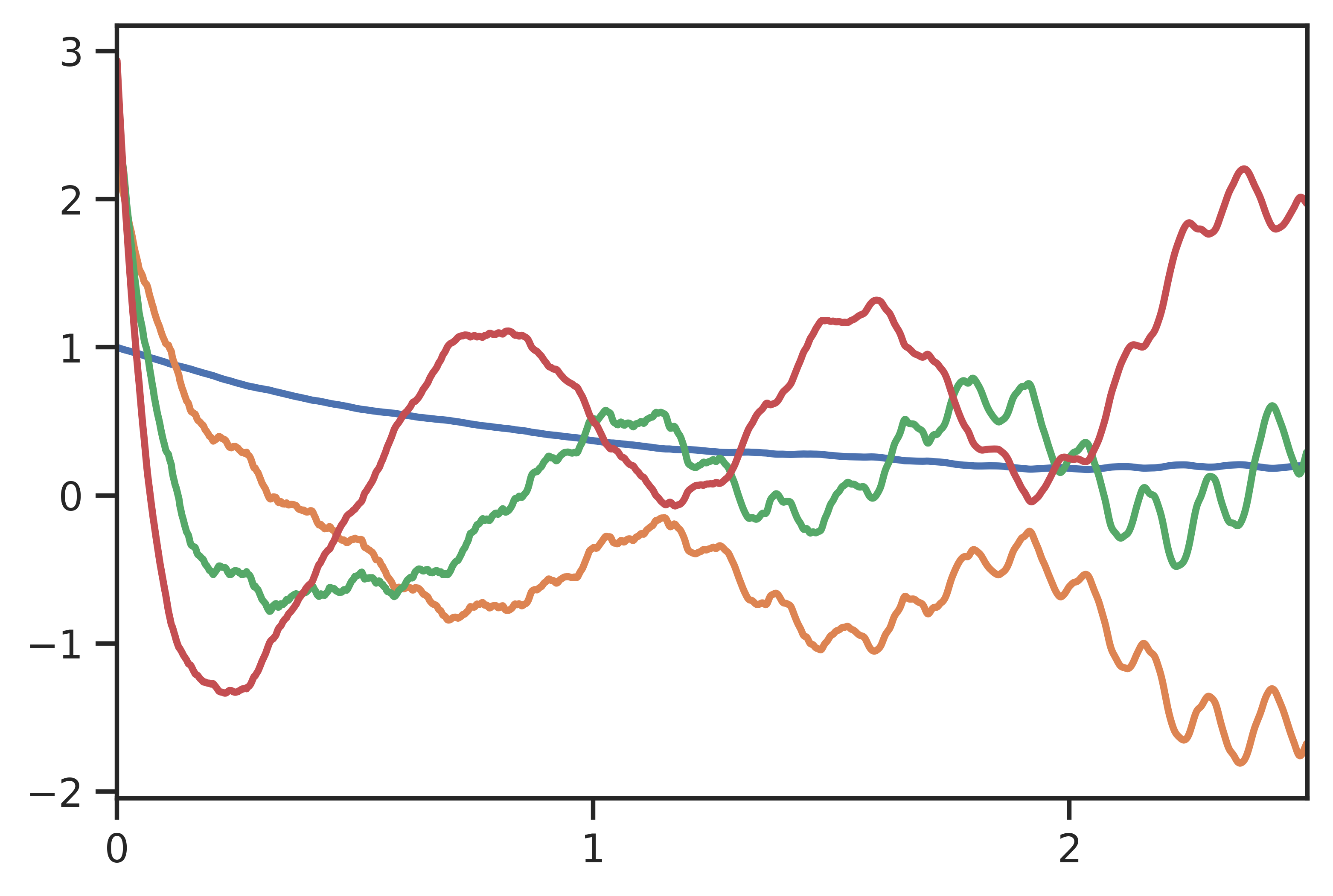}
  \caption{\( \sigma = 0.4 \)}
\end{subfigure}
\begin{subfigure}{.32\linewidth}%
  \centering
  \includegraphics[width=\linewidth]{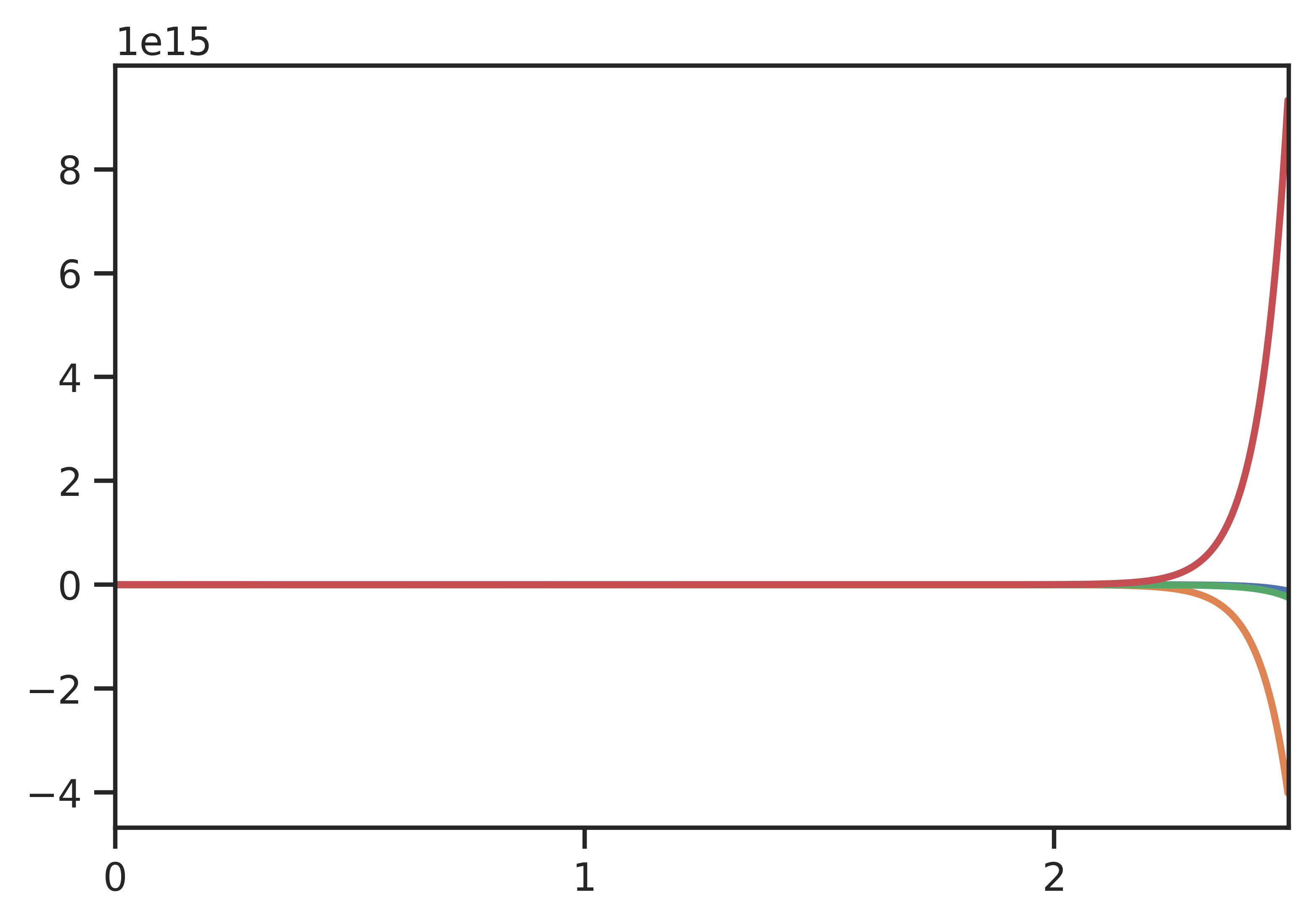}
  \caption{\( \sigma = 0.5 \)}
\end{subfigure}
\caption{
  Basis kernels for \( (\bm{A} + \bm{P}\bm{P}^\top, \bm{B}) \) for HiPPO-LegS \( (\bm{A}, \bm{B}) \) and random i.i.d. Gaussian \( \bm{P} \) with varying std \( \sigma \),
  illustrating that the SSM basis is very sensitive to low-rank perturbations.
  Note that the normal-HiPPO matrix \( \bm{A}^{(N)} = \bm{A} + \bm{P}\bm{P}^\top \) for \( \bm{P} \) with entries of magnitude \( N^{\frac{1}{2}} \) which is far larger,
  highlighting how unexpected the theoretical result \cref{thm:legsd} is.
}
\label{fig:basis-noise}
\end{figure}

Similarly, \cref{fig:basis-fout} shows a new S4 variant called S4-FouT that is also DPLR~\citep{gu2022hippo}, but removing the low-rank component dramatically changes the SSM kernels.

\begin{figure}[!ht]
\begin{subfigure}{.5\linewidth}%
  \centering
  \includegraphics[width=\linewidth]{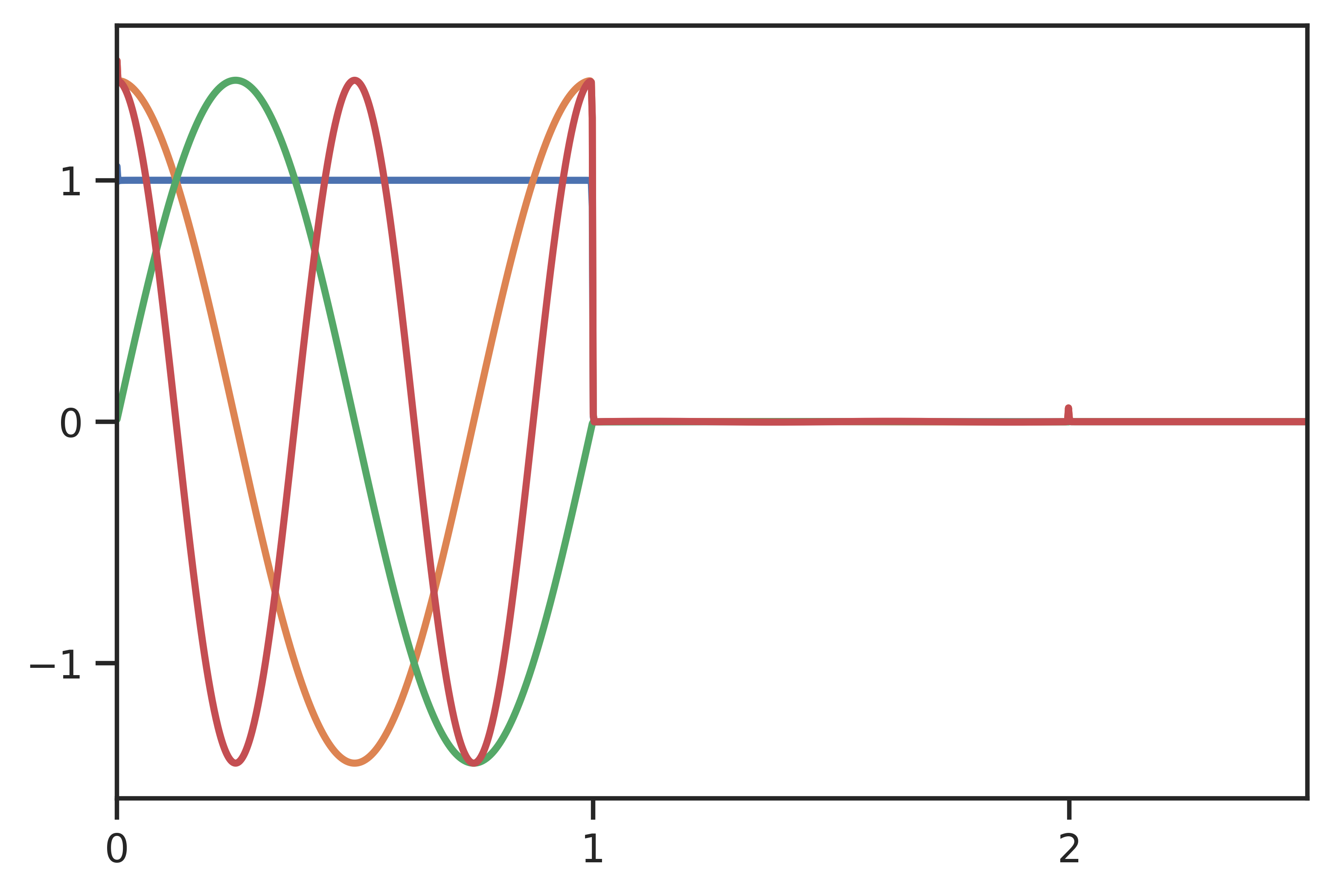}
  \caption{S4-FouT}
  \label{fig:basis-fout}
\end{subfigure}
\begin{subfigure}{.5\linewidth}%
  \centering
  \includegraphics[width=\linewidth]{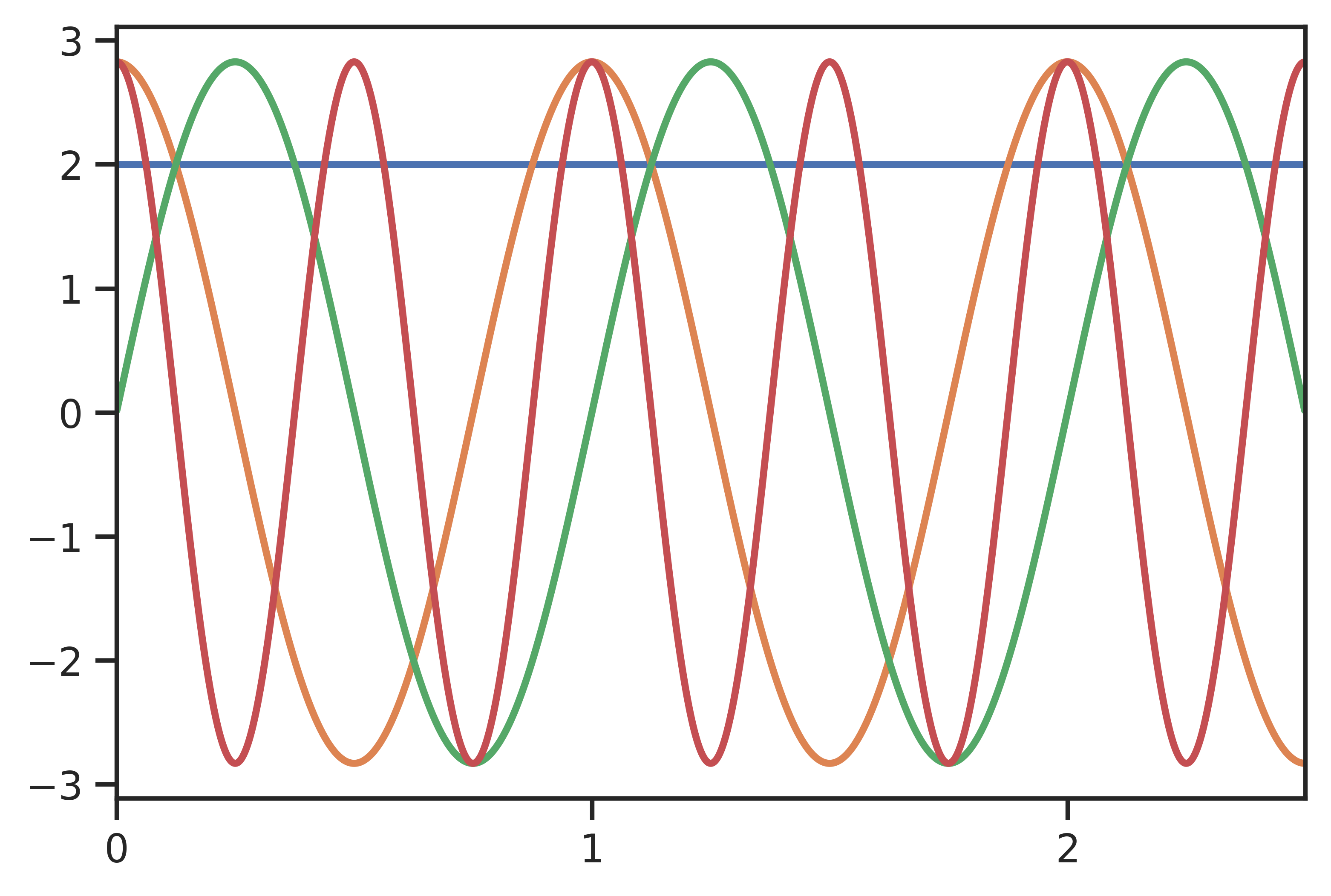}
  \caption{Diagonal approximation to S4-FouT}
  \label{fig:basis-fourier-norank}
\end{subfigure}
\caption{
  (a) S4-FouT is a version of S4 that produces \emph{truncated Fourier} basis functions choosing a particular \( (\bm{A}, \bm{B}) \). This captures sliding Fourier transforms as a state space model.
  (b) Removing the low-rank term from the FouT matrix does \emph{not} approximate S4-FouT. This diagonal state matrix has real part \( 0 \) that produces infinite oscillations and does not perform well empirically.
}
\label{fig:basis-fourier}
\end{figure}

 \section{Experiment Details}
\label{sec:experiment-details}

\paragraph{Ablation datasets training protocol.}

The architecture has \( 4 \) layers and hidden dimension \( H=128 \), resulting in around 100K trainable parameters.
The \( \bm{A} \) and \( \bm{B} \) parameters were tied across the \( H \) SSM copies; therefore the S4 models have only \( H \times \{\text{num. layers}\} \) more parameters than S4D models, arising from the \( \bm{P} \) tensor in the DPLR representation \( \bm{A} = \bm{\Lambda} - \bm{P}\bm{P}^\top \).
This choice was made because it generally does not affect performance much, while reducing parameter count and ensuring that S4 vs. S4D models have very similar numbers of parameters.

All results are averaged over 2 or 3 seeds.

All models use learning rate \( 0.004 \), \( 0.01 \) weight decay, and no other regularization or data augmentation.
For the classification tasks (sCIFAR and SC).
we use a cosine scheduler with 1 epoch warmup and decaying to \( 0 \).
For the regression task (BIDMC), we use a multistep scheduler following \citep{rusch2021unicornn,gu2021lssl}.

Reported results are all best validation accuracy, except for the large models in \cref{tab:full}.

\paragraph{Full results for parameterization ablations.}

\cref{tab:ablations-disc-full} and \cref{tab:ablations-real-full} contain the raw results for \cref{tab:ablations-param} including standard deviations.

\begin{table}
  \caption{Full results for \cref{tab:ablations-param} (Left) including standard deviations.}
    \centering
    \begin{tabular}{@{}lllll@{}}
        \toprule
        Trainable \( \bm{B} \) & Method      & sCIFAR       & SC (AR)      & BIDMC (SpO2)    \\
        \midrule
        No                     & Softmax     & 85.04 (0.22) & 89.80 (0.21) & 0.1299 (0.0048) \\
        No                     & Vandermonde & 84.78 (0.16) & 89.62 (0.03) & 0.1355 (0.0039) \\
        Yes                    & Softmax     & 85.37 (0.43) & 90.06 (0.11) & 0.1170 (0.0039) \\
        Yes                    & Vandermonde & 85.37 (0.43) & 90.34 (0.18) & 0.1274 (0.0020) \\
        \bottomrule
    \end{tabular}
    \label{tab:ablations-disc-full}
\end{table}

\begin{table}
  \caption{Full results for \cref{tab:ablations-param} (Right) including standard deviations.}
    \centering
    \begin{tabular}{@{}lllll@{}}
        \toprule
        Discretization & Real part & sCIFAR       & SC (AR)      & BIDMC (SpO2)    \\
        \midrule
        Bilinear       & Exp       & 85.20 (0.18) & 89.52 (0.01) & 0.1193 (0.0069) \\
        Bilinear       & -         & 85.35 (0.27) & 90.58 (0.37) & 0.1102 (0.0075) \\
        Bilinear       & ReLU      & 85.06 (0.06) & 90.22 (0.25) & 0.1172 (0.0063) \\
        ZOH            & Exp       & 85.02 (0.24) & 89.93 (0.07) & 0.1303 (0.0014) \\
        ZOH            & -         & 85.15 (0.13) & 90.19 (0.58) & 0.1289 (0.0035) \\
        ZOH            & ReLU      & 84.98 (0.72) & 90.03 (0.13) & 0.1232 (0.0065) \\
        \bottomrule
    \end{tabular}
    \label{tab:ablations-real-full}
\end{table}

\paragraph{Full results for large models on ablations datasets.}

\cref{tab:scifar-full,tab:bidmc-full,tab:sc-full} show full results comparing our proposed methods against the best models from the literature; citations indicate numbers from prior work.

\begin{figure}[!t]
  \small
\begin{minipage}[t]{0.35\linewidth}
  \small
  \centering
  \captionsetup{type=table}
  \caption{
    (\textbf{Sequential CIFAR image classification.} Test accuracy (Std. dev.)
  }
  \begin{tabular}{@{}ll@{}}
    \toprule
    Model                                      & \textsc{sCIFAR}          \\
    \midrule
    S4-LegS                                    & \textbf{91.80} (0.43)    \\
    S4-FouT                                    & 91.22 (0.25)             \\
    S4-(LegS/FouT)                             & \underline{91.58} (0.17) \\
    \midrule
    S4D-LegS                                   & 89.92 (1.69)             \\
    S4D-Inv                                    & 90.69 (0.06)             \\
    S4D-Lin                                    & 90.42 (0.03)             \\
    \midrule
    Transformer~\citep{trinh2018learning}      & 62.2                     \\
    FlexConv~\citep{romero2022flexconv}        & 80.82                    \\
    TrellisNet~\citep{trellisnet}              & 73.42                    \\
    LSTM~\citep{lstm,gu2020improving}          & 63.01                    \\
    r-LSTM~\citep{trinh2018learning}          & 72.2                     \\
    UR-GRU~\citep{gu2020improving}             & \underline{74.4}         \\
    HiPPO-RNN~\citep{gu2020hippo}              & 61.1                     \\
    LipschitzRNN~\citep{erichson2021lipschitz} & 64.2                     \\
    \bottomrule
  \end{tabular}
    \label{tab:scifar-full}
\end{minipage}
\begin{minipage}[t]{0.65\linewidth}
  \small
  \centering
  \captionsetup{type=table}
  \caption{
    (\textbf{BIDMC Vital signs prediction.})
    RMSE for predicting
    respiratory rate (RR), heart rate (HR), and blood oxygen (SpO2).
  }
    \begin{tabular}{@{}llll@{}}
      \toprule
      Model                              & HR                        & RR                        & SpO2                      \\
      \midrule
      S4-LegS                            & \textbf{0.332} (0.013)    & 0.247 (0.062)             & 0.090 (0.006)             \\
      S4-FouT                            & \underline{0.339} (0.020) & 0.301 (0.030)             & \textbf{0.068} (0.003)    \\
      S4-(LegS/FouT)                     & 0.344 (0.032)             & \textbf{0.163} (0.008)    & \underline{0.080} (0.007) \\
      \midrule
      S4D-LegS                           & 0.367 (0.001)             & 0.248 (0.036)             & 0.102 (0.001)             \\
      S4D-Inv                            & 0.373 (0.024)             & 0.254 (0.022)             & 0.110 (0.001)             \\
      S4D-Lin                            & 0.379 (0.006)             & \underline{0.226} (0.008) & 0.114 (0.003)             \\
      \midrule
      UnICORNN~\citep{rusch2021unicornn} & 1.39                      & 1.06                      & 0.869                     \\
      coRNN~\citep{rusch2021unicornn}    & 1.81                      & 1.45                      & -                         \\
      CKConv                             & 2.05                      & 1.214                     & 1.051                     \\
      NRDE~\citep{morrill2021neural}     & 2.97                      & 1.49                      & 1.29                      \\
      LSTM                               & 10.7                      & 2.28                      & -                         \\
      Transformer                        & 12.2                      & 2.61                      & 3.02                      \\

      XGBoost~\citep{Tan2020TSER}        & 4.72                      & 1.67                      & 1.52                      \\
      Random Forest~\citep{Tan2020TSER}  & 5.69                      & 1.85                      & 1.74                      \\
      Ridge Regress.~\citep{Tan2020TSER} & 17.3                      & 3.86                      & 4.16                      \\
      \bottomrule
    \end{tabular}
    \label{tab:bidmc-full}
\end{minipage}
\end{figure}

Note that earlier works on the Speech Commands dataset typically use pre-processing such as MFCC features, or a 10-class subset of the full 35-class dataset \citep{kidger2020neural,romero2021ckconv,gu2022efficiently}.
As we are not aware of a collection of strong baselines for raw waveform classification using the full dataset, we trained several baselines from scratch for \cref{tab:sc-full}.
The InceptionNet, ResNet-18, and XResNet-50 models are 1D adaptations from \citet{nonaka2021depth} of popular CNN architectures for vision.
The ConvNet architecture is a generic convolutional neural network that we tuned for strong performance, comprising:
\begin{itemize}%
  \item Four stages, each composed of three identical residual blocks.
  \item The first stage has model dimension (i.e. channels, in CNN nomenclature) \( H = 64 \). Each stage doubles the dimension of the previous stage (with a position-wise linear layer) and ends in an average pooling layer of width \( 4 \). Thus, the first stage operates on inputs of length \( 16384 \), dimension \( 64 \) (the input is zero-padded from 16000 to 16384) and the last on length \( 256 \), dimension \( 512 \).
  \item Each residual block has a (pre-norm) BatchNorm layer followed by a convolution layer and GeLU activation.
  \item Convolution layers have a kernel size of \( 25 \).
\end{itemize}

\begin{table}[!t]
  \small
  \centering
  \caption{
    (\textbf{Speech Commands classification.})
    Test accuracy on 35-way keyword spotting. Training examples are 1-second audio waveforms sampled at 16000Hz, or a 1-D sequence of length 16000.
    Last column indicates 0-shot testing at 8000Hz where examples are constructed by naive decimation.
  }
    \begin{tabular}{@{}llll@{}}
      \toprule
      Model          & Parameters & 16000Hz                  & 8000Hz                   \\
      \midrule
      S4-LegS        & 307K       & 96.08 (0.15)             & 91.32 (0.17)             \\
      S4-FouT        & 307K       & 95.27 (0.20)             & 91.59 (0.23)             \\
      S4-(LegS/FouT) & 307K       & 95.32 (0.10)             & 90.72 (0.68)             \\
      \midrule
      S4D-LegS       & 306K       & 95.83 (0.14)             & 91.08 (0.16)             \\
      S4D-Inv        & 306K       & \underline{96.18} (0.27) & \textbf{91.80} (0.24)    \\
      S4D-Lin        & 306K       & \textbf{96.25} (0.03)    & \underline{91.58} (0.33) \\
      \midrule
      InceptionNet   & 481K       & 61.24 (0.69)             & 05.18 (0.07)             \\
      ResNet-18      & 216K       & 77.86 (0.24)             & 08.74 (0.57)             \\
      XResNet-50     & 904K       & 83.01 (0.48)             & 07.72 (0.39)             \\
      ConvNet        & 26.2M      & 95.51 (0.18)             & 07.26 (0.79)             \\
      \bottomrule
    \end{tabular}
    \label{tab:sc-full}
\end{table}

\paragraph{Long Range Arena.}

Our Long Range Arena experiments follow the same setup as the original S4 paper with some differences in model architecture and hyperparameters.
The main global differences are as follows:
\begin{description}%
  \item [\textbf{Bidirectional}] The original S4 layer is unidirectional or causal, which is an unnecessary constraint for the classification tasks appearing in LRA. \citet{goel2022sashimi} propose a bidirectional version of S4 that simply concatenates two S4 convolution kernels back-to-back. We use this for all tasks.
  \item [\textbf{GLU feedforward}] S4 consists of \( H \) independent 1-dimensional SSMs, each of which are processed by an independent S4 SSM mapping \( (\bm{A}, \bm{B}, \bm{C}, \bm{D}) \). These outputs are then mixed with a position-wise linear layer, i.e. \( \bm{W} y \) for a learned matrix \( \bm{W} \in \mathbbm{R}^{H \times H} \).
    Instead of this linear mapping, we use a GLU activation \( (\bm{W}_1 y) \circ \sigma(\bm{W}_2 y) \) for \( \bm{W}_1, \bm{W}_2 \in \mathbbm{R}^{H \times H} \) \citep{dauphin2017language}.
    These have been empirically found to improve linear layers of DNNs in general \citep{shazeer2020glu}.
  \item [\textbf{Cosine scheduler}] Instead of the plateau scheduler used in \citep{gu2022efficiently}, we use a cosine annealing learning rate scheduler for all tasks.
  \item [\textbf{Regularization}] Almost all tasks used no dropout and \( 0.05 \) weight decay.
  \item [\textbf{Architecture}] Almost all tasks used an architecture with \( 6 \) layers, \( H=256 \) hidden units, BatchNorm, pre-norm placement of the normalization layer.
\end{description}

Exceptions to the above rules are described below.
Full hyperparameters are in \cref{tab:lra-hyperparameters}.

\paragraph{sCIFAR / LRA Image.}
This dataset is grayscale sequential CIFAR-10, and the settings for this task were taken from S4's hyperparameters on the normal sequential CIFAR-10 task.
In particular, this used LayerNorm~\citep{ba2016layer} instead of BatchNorm~\citep{ioffe2015batch}, a larger number of hidden features \( H \), post-norm instead of pre-norm, and minor dropout.
We note that the choice of normalization and increased \( H \) do not make a significant difference on final performance, still attaining classification accuracy in the high 80's.
Dropout does seem to make a difference.

\paragraph{BIDMC.}
We used a larger state size of \( N=256 \), since we hypothesized that picking up higher frequency features on this dataset would help.
We also used a step scheduler that decayed the LR by \( 0.5 \) every \( 100 \) epochs, following prior work~\citep{rusch2021unicornn,gu2021lssl}.

\paragraph{ListOps.}
We hypothesized that this task benefits from deeper models, because of the explicit hierarchical nature of the task,
so the architecture used here had 8 layers and \( H=128 \) hidden features.
However, results are very close with much smaller models.
We also found that post-norm generalized better than pre-norm, but results are again close (less than \( 1\% \) difference).

\paragraph{PathX.}
As described in \citep{gu2022hippo}, the initialization range for PathX is decreased from \( (\dt_{min}, \dt_{max}) = (0.001, 0.1) \) to \( (\dt_{min}, \dt_{max}) = (0.0001, 0.01) \).

\begin{table*}[!t]
  \caption{
    The values of the best hyperparameters found for all datasets; full models on ablation datasets (Top) and LRA (Bottom).
    LR is learning rate and WD is weight decay. BN and LN refer to Batch Normalization and Layer Normalization.
  }
  \centering
  \resizebox{\textwidth}{!}{%
    \begin{tabular}{@{}llllllllllll@{}}
      \toprule
                          & \textbf{Depth} & \textbf{Features \( H \)} & \textbf{State Size \( N \)} & \textbf{Norm} & \textbf{Pre-norm} & {\bf Dropout} & {\bf LR} & {\bf Batch Size} & {\bf Epochs} & \textbf{WD} & \( (\dt_{min}, \dt_{max}) \) \\
      \midrule
      \textbf{sCIFAR}     & 6              & 512                       & 64                          & LN            & False             & 0.1           & 0.01     & 50               & 200          & 0.05        & \( (0.001, 0.1) \)           \\
      \textbf{SC}         & 6              & 128                       & 64                          & BN            & True              & 0             & 0.01     & 16               & 40           & 0.05        & \( (0.001, 0.1) \)           \\
      \textbf{BIDMC}      & 6              & 128                       & 256                         & LN            & True              & 0             & 0.01     & 32               & 500          & 0.05        & \( (0.001, 0.1) \)           \\
      \midrule
      \textbf{ListOps}    & 8              & 128                       & 64                          & BN            & False             & 0             & 0.01     & 50               & 40           & 0.05        & \( (0.001, 0.1) \)           \\
      \textbf{Text}       & 6              & 256                       & 64                          & BN            & True              & 0             & 0.01     & 16               & 32           & 0.05        & \( (0.001, 0.1) \)           \\
      \textbf{Retrieval}  & 6              & 256                       & 64                          & BN            & True              & 0             & 0.01     & 64               & 20           & 0.05        & \( (0.001, 0.1) \)           \\
      \textbf{Image}      & 6              & 512                       & 64                          & LN            & False             & 0.1           & 0.01     & 50               & 200          & 0.05        & \( (0.001, 0.1) \)           \\
      \textbf{Pathfinder} & 6              & 256                       & 64                          & BN            & True              & 0             & 0.004    & 64               & 200          & 0.03        & \( (0.001, 0.1) \)           \\
      \textbf{Path-X}     & 6              & 256                       & 64                          & BN            & True              & 0             & 0.0005   & 32               & 50           & 0.05        & \( (0.0001, 0.01) \)         \\
      \bottomrule
    \end{tabular}%
  }
  \label{tab:lra-hyperparameters}
\end{table*}

\end{document}